\def\eqref#1{equation~\ref{#1}}
\def\1{\bm{1}}
\def\eps{{\epsilon}}
\DeclareMathAlphabet{\mathsfit}{\encodingdefault}{\sfdefault}{m}{sl}
\SetMathAlphabet{\mathsfit}{bold}{\encodingdefault}{\sfdefault}{bx}{n}
\theoremstyle{plain}
\newtheorem{theorem}{Theorem}[section]
\newtheorem{lemma}[theorem]{Lemma}
\theoremstyle{definition}
\newtheorem{assumption}[theorem]{Assumption}
\theoremstyle{remark}
\newtheorem{remark}[theorem]{Remark}
\newcolumntype{L}[1]{>{\raggedright\arraybackslash}p{#1}} % Left-aligned with fixed width
\newcolumntype{C}[1]{>{\centering\arraybackslash}p{#1}} % Center-aligned with fixed width
\newcommand{\samethanks}[1][\value{footnote}]{\footnotemark[#1]}
\definecolor{ired}{RGB}{215,25,28}
\definecolor{igreen}{RGB}{171,221,164}
\definecolor{iblue}{RGB}{43,131,186}
\definecolor{iyellow}{RGB}{253,174,97}
\title{A Closer Look at Personalized Fine-Tuning in Heterogeneous Federated Learning}
\author{%
\name Minghui Chen\thanks{Equal contribution. Authors are listed in alphabetical order.}\\
  \addr University of British Columbia \& Vector Institute
  \AND
\name Hrad Ghoukasian\samethanks \\
  \addr McMaster University  \& Vector Institute
    \AND
  \name Ruinan Jin\samethanks \\
  \addr University of British Columbia \& Vector Institute
  \AND
  \name Zehua Wang \\
  \addr University of British Columbia
  \AND
  \name Sai Praneeth Karimireddy \\
  \addr University of Southern California
  \AND
  \name Xiaoxiao Li \\
  \addr University of British Columbia \& Vector Institute
}
\begin{document}

\maketitle

\begin{abstract}
Federated Learning (FL) enables decentralized, privacy-preserving model training but struggles to balance global generalization and local personalization due to non-identical data distributions across clients. Personalized Fine-Tuning (PFT), a popular post-hoc solution, fine-tunes the final global model locally but often overfits to skewed client distributions or fails under domain shifts. We propose adapting Linear Probing followed by full Fine-Tuning (LP-FT)—a principled centralized strategy for alleviating feature distortion~\citep{DBLP:conf/iclr/KumarRJ0L22}—to the FL setting. Through systematic evaluation across seven datasets and six PFT variants, we demonstrate LP-FT’s superiority in balancing personalization and generalization. Our analysis uncovers federated feature distortion, a phenomenon where local fine-tuning destabilizes globally learned features, and theoretically characterizes how LP-FT mitigates this via phased parameter updates. We further establish conditions (e.g., partial feature overlap, covariate-concept shift) under which LP-FT outperforms standard fine-tuning, offering actionable guidelines for deploying robust personalization in FL.\footnote{Our code is publicly available at \url{https://github.com/MinghuiChen43/PFT}}
\end{abstract}

\section{Introduction}
\label{sec:intro}
Federated Learning (FL)~\citep{DBLP:conf/aistats/McMahanMRHA17} enables collaborative learning from decentralized data while preserving privacy, typically by training a shared global model, referred to as General FL (GFL). However, variations in client data distributions often limit GFL’s effectiveness. 
Personalized FL (PFL)~\citep{DBLP:journals/ftml/KairouzMABBBBCC21} addresses this by customizing models to individual clients. \textit{Personalized Fine-Tuning} (PFT) \citep{DBLP:journals/corr/abs-2206-09262}, a simple and practical strategy in the PFL family, is a widely adopted post-hoc, plug-and-play approach to diverse GFL methods. As shown in Fig.~\ref{fig:intro}(a), PFT fine-tunes the final global model from GFL to personalize it. This simple strategy ensures easy implementation and adaptation across FL scenarios.

Unlike \emph{process-integrated PFL} methods (\textit{e.g.}, those involving server-client coordination that modifies the entire federated training process~\citep{DBLP:journals/corr/abs-2003-13461, DBLP:conf/icml/CollinsHMS21} or local training strategies that require iterative server feedback~\citep{DBLP:conf/icml/KarimireddyKMRS20, DBLP:conf/cvpr/TamirisaXBZAS24}), PFT eliminates the need for costly global-training-dependent adaptations.
Instead, it fine-tunes the final GFL model once post-training, ensuring simplicity, broad compatibility, and deployment robustness without redesigning the GFL framework (see Tab.~\ref{tab:pft_advantage}). These characteristics establish PFT as a critical fallback strategy when process-integrated PFL approaches prove infeasible — particularly in scenarios where global training protocols are unmodifiable due to infrastructure lock-in or legacy FL infrastructure, or strict coordination agreement constraints (\textit{e.g.}, healthcare systems bound by long-term service agreements).

However, PFT often causes models to overfit on local data, thereby compromising the generalization of FL. This is particularly concerning in critical real-world applications, such as FL across multiple hospitals for disease diagnosis, where a local model must not only perform well on hospital patient data, but also generalize effectively to diverse patient populations that may be encountered on-site in the future~\citep{DBLP:journals/jhir/XuGSWBW21}. Therefore, balancing the optimization of individual client performance (personalization) with strong global performance (generalization across all clients) is crucial~\citep{DBLP:journals/corr/abs-2206-09262, huang2024overcoming}.

\begin{table*}[t]
\centering
\small
\caption{Comparisons of Process-Integrated PFL vs. Post-Hoc PFT}
\label{tab:pft_vs_pfl}
\renewcommand{\arraystretch}{1.4}
\centering  
\small  
\begin{tabularx}{\linewidth}{l X X} 
\toprule  
\textbf{Criterion} & \textbf{Process-Integrated PFL} & \textbf{PFT (Post-Hoc)} \\  
\midrule  
\textbf{Global training modification} & \textbf{Required} (aggregation changes or iterative local training with server feedback) & \textbf{None} (algorithm-agnostic) \\  
\textbf{Implementation Complexity} & \textbf{High} (client-server coordination, custom aggregation/regularization) & \textbf{Low} (single fine-tuning step, client autonomy, plug-and-play) \\  
\textbf{Compatibility with GFL} & \textbf{Limited} (framework-specific) & \textbf{Broad} (process-agnostic) \\  
\bottomrule  
\end{tabularx}
\label{tab:pft_advantage}  
\end{table*}  

% \begin{wrapfigure}{r}{0.6\textwidth}
%\vspace{-5mm}

\begin{figure*}[t]
\centering
\includegraphics[width=1\textwidth]{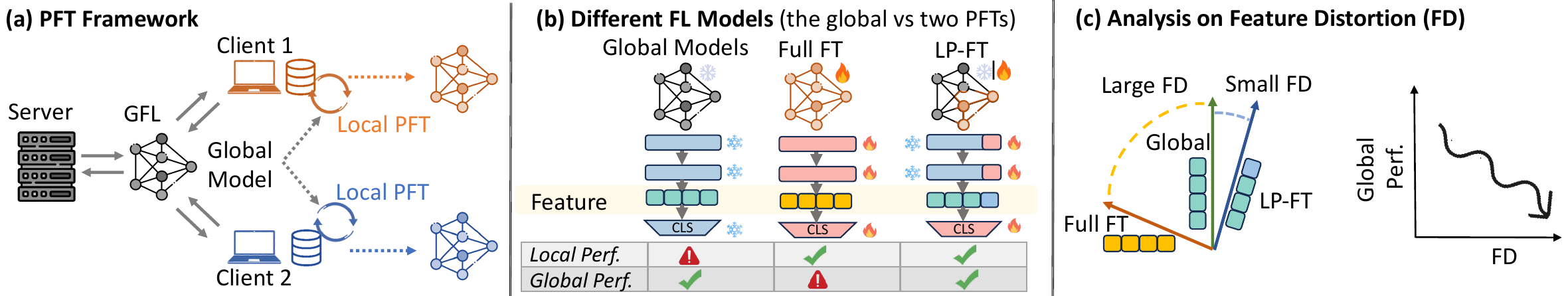}
\caption{Overview of the problem setting and FL strategies investigated in this paper. (a) PFT framework, where each client fine-tunes a global model trained via GFL (e.g., FedAvg in this paper). Unlike process-integrated PFL, PFT focuses solely on the final fine-tuning stage with no further communication. (b) Three different FL models: the global FL model, the full-parameter FT (full FT) model, and the LP-FT model; their parameter updating patterns and local/global performance (perf.) under data heterogeneity; The fire icon indicates the actively tuned parameter, the frozen icon represents the fixed weight, and the mixed fire-frozen icon denotes the weight that is not actively tuned. (c) Visualization of feature distortion under PFL and its possible link to global generalization.}
\label{fig:intro}
\vspace{-5mm}
\end{figure*}

% \end{wrapfigure}

In this work, we conduct a comprehensive evaluation of various strategies for PFT in heterogeneous FL environments under different distribution shifts, categorized as covariate shift~\citep{DBLP:conf/iccv/PengBXHSW19, DBLP:conf/iclr/HendrycksD19} and concept shift~\citep{DBLP:conf/nips/IzmailovKGW22}.

Despite meticulously tuning the hyper-parameters in some FT methods (full parameter FT, sparse FT~\cite{lee2018snip} and Proximal FT~\cite{li2020federated}) adapted in FL, we observe persistent issues of local overfitting when increasing the local fine-tuning epochs, wherein localized performance gains are achieved at the significant cost of global generalization. 

LP-FT~\citep{DBLP:conf/iclr/KumarRJ0L22}—a two-phase fine-tuning strategy that first updates \emph{only} the linear classifier (Linear Probing, LP) before optimizing all parameters (Full Fine-Tuning, FT)—has demonstrated state-of-the-art performance in centralized learning by mitigating overfitting and enhancing domain adaptation. However, its potential to address FL challenges, such as client data heterogeneity and instability during decentralized personalization, remains unexplored. In FL, local fine-tuning risks overfitting to client distributions and diverging from globally useful representations. LP-FT’s structured separation of updating the head and then fine-tuning offers a principled framework to stabilize personalization in non-IID settings while preserving global knowledge. 

Yet, no work has rigorously evaluated LP-FT's efficacy in FL—a critical oversight given the growing demand for lightweight, flexible, and robust personalization strategies. Empirically, we conduct a comprehensive evaluation across seven datasets and diverse distribution shifts, benchmarking our adapted LP-FT against other advanced fine-tuning methods in our PFT framework. Our findings reveal two key insights: (1) existing PFT methods suffer from personalized overfitting, where local fine-tuning distorts feature representations, degrading global performance (Fig.~\ref{fig:overfitting}); (2) LP-FT mitigates this issue, preserving generalization while enhancing local adaptation under extreme data heterogeneity. Further, extensive ablation studies (Fig.~\ref{fig:feature_dist}) confirm that LP-FT reduces federated feature distortion, establishing it as a strong and scalable baseline for PFT in FL.

Theoretically, we revisit \emph{feature distortion}--a key challenge previously defined in centralized LP-FT as feature shifts under out-of-domain fine-tuning—in FL’s unique setting of partially overlapping local and global distributions. Unlike centralized analyses~\citep{DBLP:conf/iclr/KumarRJ0L22}, which assume a single ground-truth function, FL involves multiple client-specific ground-truth functions, necessitating a new theoretical framework. We address this by decoupling the feature extractor and classifier to analyze LP-FT’s adaptation to heterogeneous client data. Further, we introduce a combined covariate-concept shift setting, better reflecting real-world FL scenarios. Our analysis reveals conditions under which LP-FT outperforms full fine-tuning, advancing the understanding of fine-tuning strategies in FL. 

This paper takes a closer look at PFT and establishes LP-FT as a theoretically grounded and empirically viable solution for FL’s unique constraints. In summary, our contributions are threefold: (1) Methodologically, this paper presents the first systematic and in-depth study on the post-hoc and plug-and-play PFT framework and introduces LP-FT as an effective approach for handling diverse distribution shifts. We comprehensively demonstrate its ability to balance personalization and generalization in the FL setting.  (2) Empirically, we conduct extensive experiments across seven datasets under various distribution shifts, complemented by thorough ablation studies. Our results validate the robustness of LP-FT and reveal overfitting tendencies in prior PFT methods. These empirical insights not only establish LP-FT as a strong baseline for PFT but also provide a foundation for future research in simple and flexible FL personalization. (3) Theoretically, we offer a rigorous theoretical analysis of LP-FT using two-layer linear networks, demonstrating its superior ability to preserve global performance compared to FT in both concept shift and combined concept-covariate shift scenarios.
\section{Related Work}
\label{sec:related_work}
\noindent\textbf{Fine-Tuning} pre-trained models has gained prominence in centralized learning, particularly with the rise of foundation models~\citep{DBLP:journals/corr/abs-2108-07258}. However, fine-tuning with limited data often leads to overfitting. \textit{Model soups}~\citep{DBLP:conf/icml/WortsmanIGRLMNF22} and partial fine-tuning~\citep{DBLP:conf/iclr/0001CTKYLF23} further enhance adaptation by selectively updating model components. LP-FT~\citep{DBLP:conf/iclr/KumarRJ0L22}, which combines linear probing with full fine-tuning, addresses feature distortions and provides insights into model adaptation under diverse shifts~\citep{DBLP:conf/iclr/TrivediKT23}. However, the effectiveness of these centralized fine-tuning strategies in the heterogeneous FL setting remains largely underexplored. 

\noindent\textbf{Personalized FL} aims to address the challenges of decentralized learning with non-IID data. Classical \textit{general FL (GFL)} methods, such as FedAvg~\citep{DBLP:conf/aistats/McMahanMRHA17}, struggle in such settings. Despite the advancements in GFL methods (\textit{e.g.}, FedNova~\citep{DBLP:conf/nips/WangLLJP20}), FedProx~\citep{DBLP:conf/mlsys/LiSZSTS20}, Scaffold~\citep{DBLP:conf/icml/KarimireddyKMRS20}), their focus on building a single global model does not adequately address the data heterogeneity inherent in FL, leading to the emergence of
 \textit{personalized FL (PFL)}~\citep{DBLP:conf/nips/GhoshCYR20, DBLP:journals/corr/abs-2002-04758}, which focuses on tailoring individualized models for each client. 
%Recent studies~\citep{DBLP:conf/icml/GuoTL24, DBLP:conf/cvpr/SonKCHL24} further highlight the limitations of existing methods under feature shift, introducing clustering and regularization techniques to better handle distribution heterogeneity.  
However, most PFL methods are \textit{process-integrated}, requiring modifications to the global training pipeline through server-client coordination~\citep{DBLP:journals/corr/abs-2003-13461, DBLP:conf/icml/CollinsHMS21} or iterative local training with server feedback~\citep{DBLP:conf/icml/KarimireddyKMRS20, DBLP:conf/cvpr/TamirisaXBZAS24}, or modifying training with customized clustering/regularization~~\citep{DBLP:conf/icml/GuoTL24, DBLP:conf/cvpr/SonKCHL24}. These approaches impose constraints on flexibility and deployment, as we summarized in Tab.~\ref{tab:pft_vs_pfl}.
In contrast, \textit{post-hoc personalized fine-tuning (PFT)}~\cite{DBLP:journals/corr/abs-2206-09262} fine-tunes the final global model from GFL without modifying the training process, providing a lightweight and flexible approach for FL personalization. However, its potential is underexplored, possibly due to overfitting risks on client data. Additional discussion on personalization and fine-tuning is in App.~\ref{app:ft}.

\section{Empirical Study of PFT}
\label{sec: extensive empirical evaluation}
To systematically investigate the challenges and opportunities in PFT, we present a comprehensive empirical study. First, in Sec.~\ref{exp:overview}, we formalize the problem of PFT and characterize the spectrum of data heterogeneity to be studied. Next, Sec.~\ref{exp:setting} details our experimental setup, including datasets and PFT strategies under consideration. Our investigation then addresses a critical yet understudied phenomenon: Sec.~\ref{exp:overfitting} analyzes the prevalence of \textit{personalized overfitting} in PFT across distribution shifts, even with careful hyper-parameter tuning. Motivated by this finding, Sec.~\ref{subsec:baseline} introduces LP-FT and benchmarks its performance against alternative PFT strategies in FL, showing its superior ability to balance local adaptation with global knowledge retention. Finally, to uncover the mechanistic drivers of generalization challenges, Sec.~\ref{exp:insight} conducts the first systematic analysis of federated feature distortion—quantifying how client-specific fine-tuning trajectories alter latent representations and degrade model robustness.

\subsection{Overview and Definitions}
\label{exp:overview}
\noindent\textbf{Problem Setting.}
In a FL setting, each client \( i \in [C] \) has a local dataset \( (\mathbf{X}_i, \mathbf{Y}_i) \) generated from a potentially distinct distribution, which may differ across clients due to distribution shifts. PFT aims to optimize local model parameters \( \theta_L \) for each client, initialized from a well-trained global model \(\theta_G\). The objective is to minimize the local loss \( \mathcal{L}_L(\theta_L) \) for improved local performance while ensuring that the global loss \( \mathcal{L}_G(\theta_L) \) remains close to that of a pre-trained global model. This creates a trade-off between personalization (minimizing local loss) and maintaining generalization (minimizing global loss) across clients. The global data distribution \( \mathcal{D}_G \) is defined as a mixture of the local distributions \( \mathcal{D}_i \), given by $\mathcal{D}_G = \frac{1}{C} \sum_{i \in [C]} \mathcal{D}_i.$

We formally define distributions of interests, concept shift and covariate shift that directly lead to feature shift in heterogeneous FL context\footnote{We also realized that LP-FT can be effective for label shift settings as the results shown in App.~\ref{apx:subsec:label_shift}.}, following~\cite{DBLP:conf/iclr/LiJZKD21}.

\noindent\textbf{Covariate Shift} refers to variations in the input feature distribution across clients while keeping the conditional distribution of the output given the input consistent. Formally, for any pair of clients \( i \) and \( j \) with \( i \neq j \), the data-generating process is characterized by:
\[
P_i(x) \neq P_j(x), \hspace{0.2cm} \text{but} \hspace{0.2cm} P_i(y \mid x) = P_j(y \mid x) \hspace{0.2cm} \text{for all } i \neq j.
\]
This means that while clients \( i \) and \( j \) may have different input distributions \( P_i(x) \) and \( P_j(x) \), the conditional distribution \( P(y \mid x) \) remains consistent across all clients.

\noindent\textbf{Concept Shift} occurs when the conditional relationship between input features and outputs differs across clients, while the input feature distribution remains unchanged. Formally, for any two clients \( i \) and \( j \) with \( i \neq j \), the data-generating process satisfies:
\[
P_i(y \mid x) \neq P_j(y \mid x), \hspace{0.2cm} \text{but} \hspace{0.2cm} P_i(x) = P_j(x) \hspace{0.2cm}\text{for all } i \neq j.
\]
This implies that although all clients share the same input distribution \( P(x) \), the conditional distribution \( P_i(y \mid x) \) varies, reflecting different mappings between features and labels across clients.

\subsection{Empirical Analysis Settings}
\label{exp:setting}
    \noindent \textbf{Datasets with Covariate Shift.} We include \texttt{Digit5}, \texttt{DomainNet}, \texttt{CIFAR10-C}, and \texttt{CIFAR100-C}. \texttt{Digit5} and \texttt{DomainNet} belong to the \textit{\underline{feature-shift}} subgroup, where the data features represent different subpopulations within the same classes. For example, \texttt{Digit5} contains 10-digit images collected from various sources with different backgrounds, such as black-and-white for MNIST and colorful digits for synthetic datasets. \texttt{CIFAR10-C} and \texttt{CIFAR100-C} fall under the \textit{\underline{input-level shift}} category, where 50 types of image corruptions are introduced for evaluation. We simulate 50 clients, each corresponding to a specific corruption type, as detailed in previous works~\cite{hendrycks2018benchmarking, mintun2021interaction, chen2021benchmarks}. A detailed explanation of the data splitting and its introduction is provided in Tab.~\ref{tab:datasets} in Appendix. The visualizations of data are provided in Fig.~\ref{fig:visualization}.

    \noindent\textbf{Datasets with Concept Shift.} \texttt{CheXpert} and \texttt{CelebA} are included for this part, whereas both belong to the \textit{\underline{spurious correlation-based shift}} subgroup, which involves misleading relationships in the training data that models may exploit, despite being unrelated to the actual target. This reliance can lead to poor performance when such correlations are absent in new data, classifying it as a form of concept shift~\citep{DBLP:conf/nips/IzmailovKGW22}. Similarly, Tab.~\ref{tab:datasets} and Fig.~\ref{fig:visualization} provide further details.

\noindent \textbf{Fine-tuning Strategies.} 
Our study focuses on post-hoc PFT, a plug-and-play framework that operates exclusively after GFL training.  Unlike conventional fine-tuning in centralized settings that primarily addresses domain adaptation by transferring a model from a source to a disjoint target domain, PFT operates on a global model pre-trained via GFL, which has already been exposed to heterogeneous client data during collaborative training and must balance local performance (adapting to a client’s unique distribution) with global performance (avoiding overfitting to statistically biased local updates and preserving cross-client generalizability).
In this study, we establish a suite of fine-tuning strategies that can be easily integrated into PFL as \textbf{baselines} for PFT:
\textit{\underline{Full-parameter FT}} is a naive FT strategy. It adjusts all model parameters.
\textit{\underline{Proximal FT}}~\citep{li2020federated} aims to preserve the pre-trained model's original knowledge. It applies proximal regularization to penalize large deviations from the initial model parameters, helping to maintain generalization.
\textit{\underline{Sparse FT}}~\citep{lee2018snip} promotes sparsity in parameter updates. It adjusts only the most relevant weights, enhancing efficiency while regularizing the training from overfitting.
\textit{\underline{Soup FT}}~\citep{wortsman2022model} improves robustness by averaging the weights of multiple fine-tuned model instances. Each instance is trained with different initializations, creating a “model soup” that integrates their strengths.
\textit{\underline{LSS FT}}~\citep{DBLP:journals/corr/abs-2410-23660} (Local Superior Soups) is an innovative model interpolation-based local training technique designed to enhance FL generalization and communication efficiency by encouraging the exploration of a connected low-loss basin through optimizable and regularized model interpolation. 
Each strategy is designed to balance model performance with different priorities, such as preserving knowledge, enhancing robustness, or improving efficiency. A more detailed experiment setting is presented in App.~\ref{app:exp_detail}.\footnote{We primarily focus on CNN-based models. We also include parameter-efficient fine-tuning results on transformer-based models in Appendix Tab.~\ref{tab:app_peft}.
}

\subsection{Global and Local Performance Trends in PFT Baselines}
\label{exp:overfitting}

In practice, PFT is susceptible to overfitting to local data, due to the relatively small amount of data available at local clients. Note that the \textit{overfitting} defined in the FL context is characterized by \textit{a consistent improvement in local performance while global performance noticeably deteriorates~\citep{DBLP:journals/corr/abs-2206-09262, DBLP:conf/miccai/ChenJDWL23} -- the average gain in local performance can be smaller than the loss in global performance}. To measure the model's overall local and global performance, we measure the averaged client-wise local and global accuracy. Specifically, this metric reflects the average performance between clients' \textit{local} test accuracy and their local model's accuracy on the rest of the clients (\textit{global} accuracy). The metric's decreasing trend with increasing local training epochs during the finetuning stage indicates personalized overfitting. Notably, this trend persists even when considering only global performance metrics, as local performance tends to show increases in PFT under overfitting conditions.

In all subplots of Fig.~\ref{fig:overfitting}, we evaluate baseline PFT strategies under diverse distribution shifts, including input-level shifts (\texttt{CIFAR100-C}), feature-level shifts (\texttt{Digit5}), and spurious correlation-based shifts (\texttt{CheXpert}). We systematically
adjusted hyperparameters to evaluate their impact on performance. Fig.~\ref{fig:easy_overfiiting_sub1} demonstrates that overfitting persists even when fine-tuning with reduced learning rates. Fig.~\ref{fig:easy_overfitting_sub2} reveals that gradient sparsity adjustments (where higher sparsity rates mask more parameter updates) fail to mitigate overfitting as training epochs increase. Fig.~\ref{fig:easy_overfitting_sub3} further shows that proximal regularization terms, designed to bias updates toward the initial global model, still exhibit global performance decay despite regularization.

\begin{figure*}[htbp]
\centering
\begin{subfigure}{0.33\textwidth}
  \centering
  \includegraphics[width=1.0\linewidth]{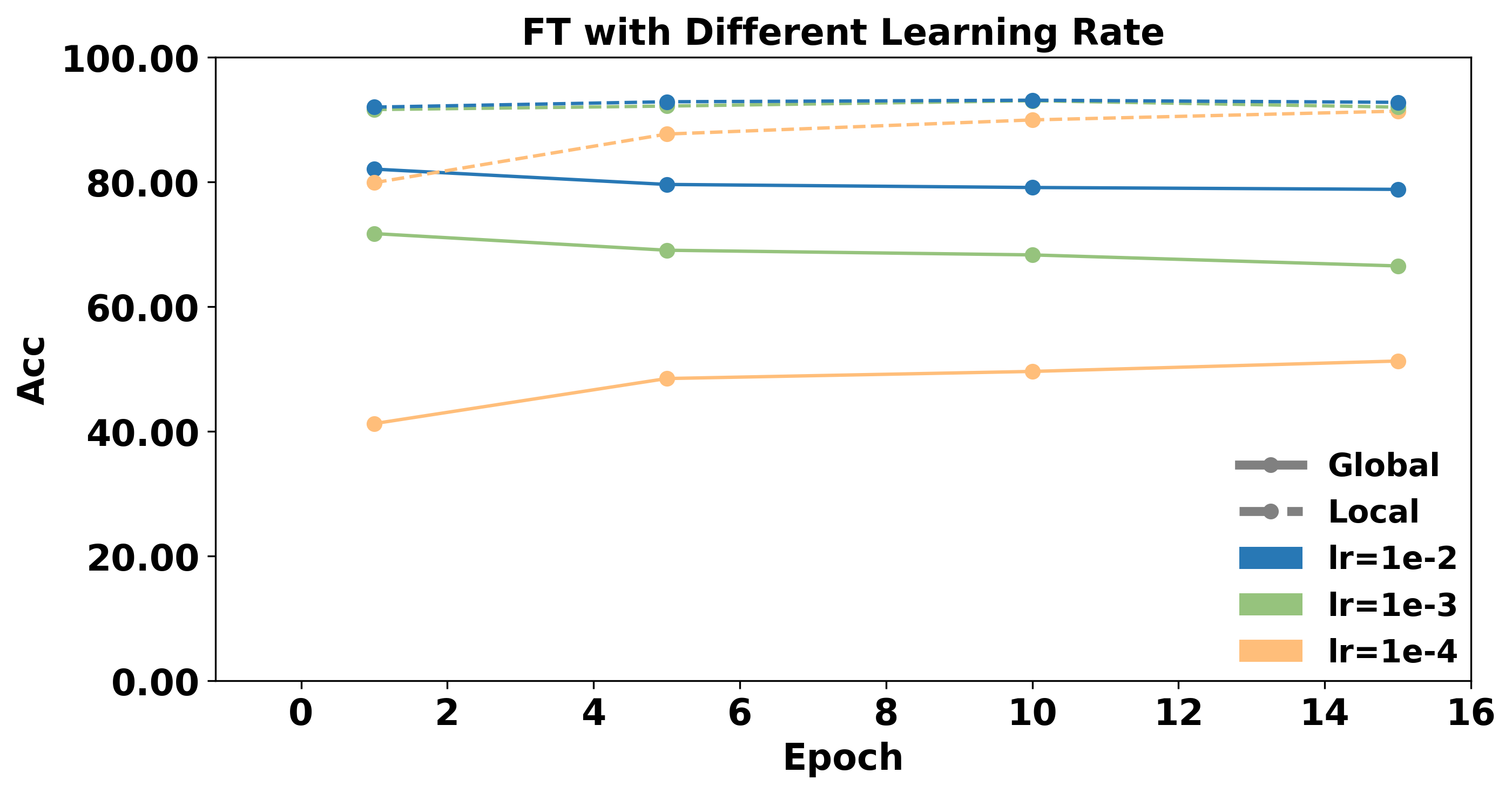}
  \caption{\textbf{Full FT}}
  \label{fig:easy_overfiiting_sub1}
\end{subfigure}%
\begin{subfigure}{0.32\textwidth}
  \centering
  \includegraphics[width=1.0\linewidth]{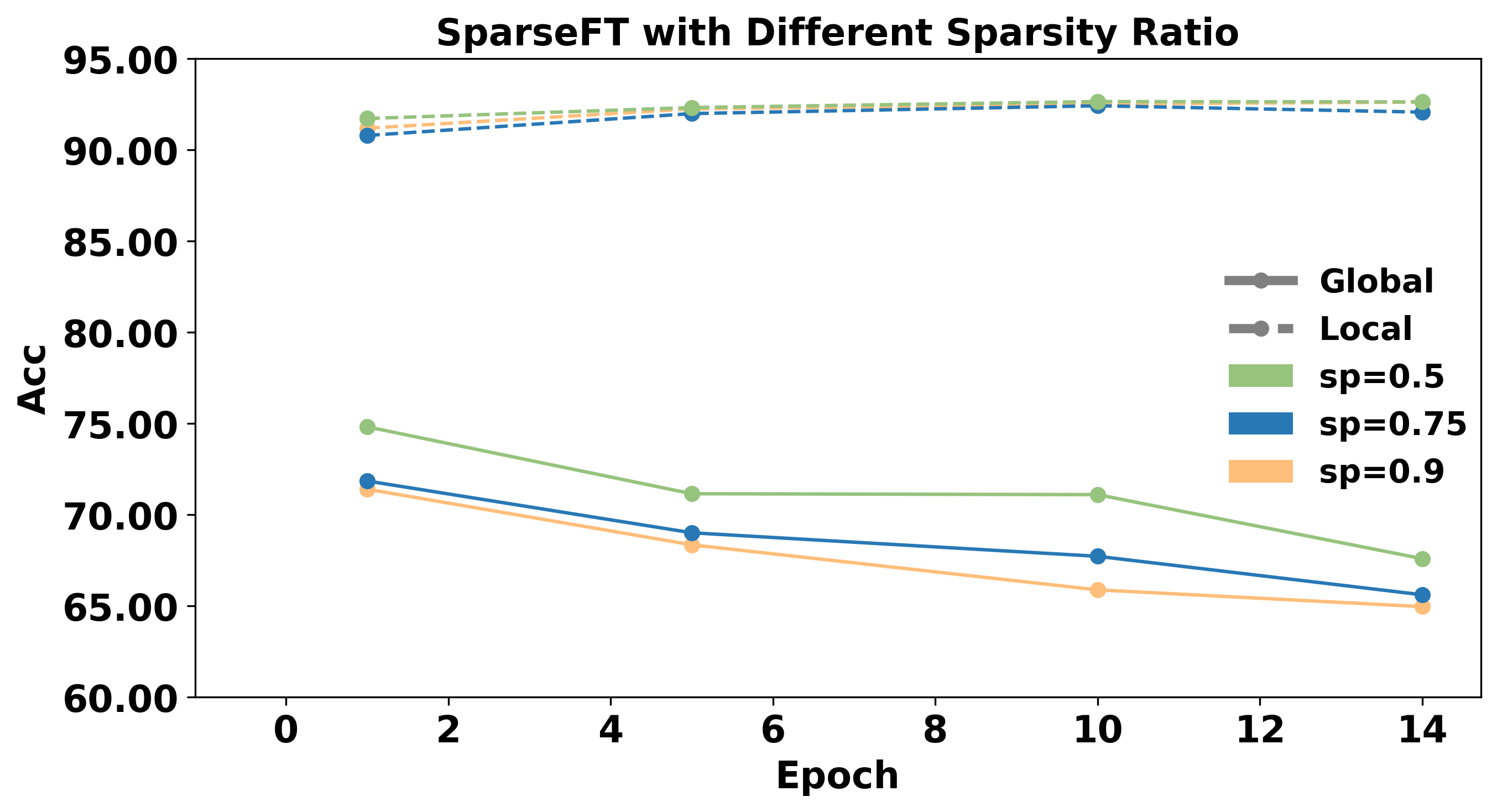}
  \caption{\textbf{Sparse FT}}
  \label{fig:easy_overfitting_sub2}
\end{subfigure}
\begin{subfigure}{0.32\textwidth}
  \centering
  \includegraphics[width=1.0\linewidth]{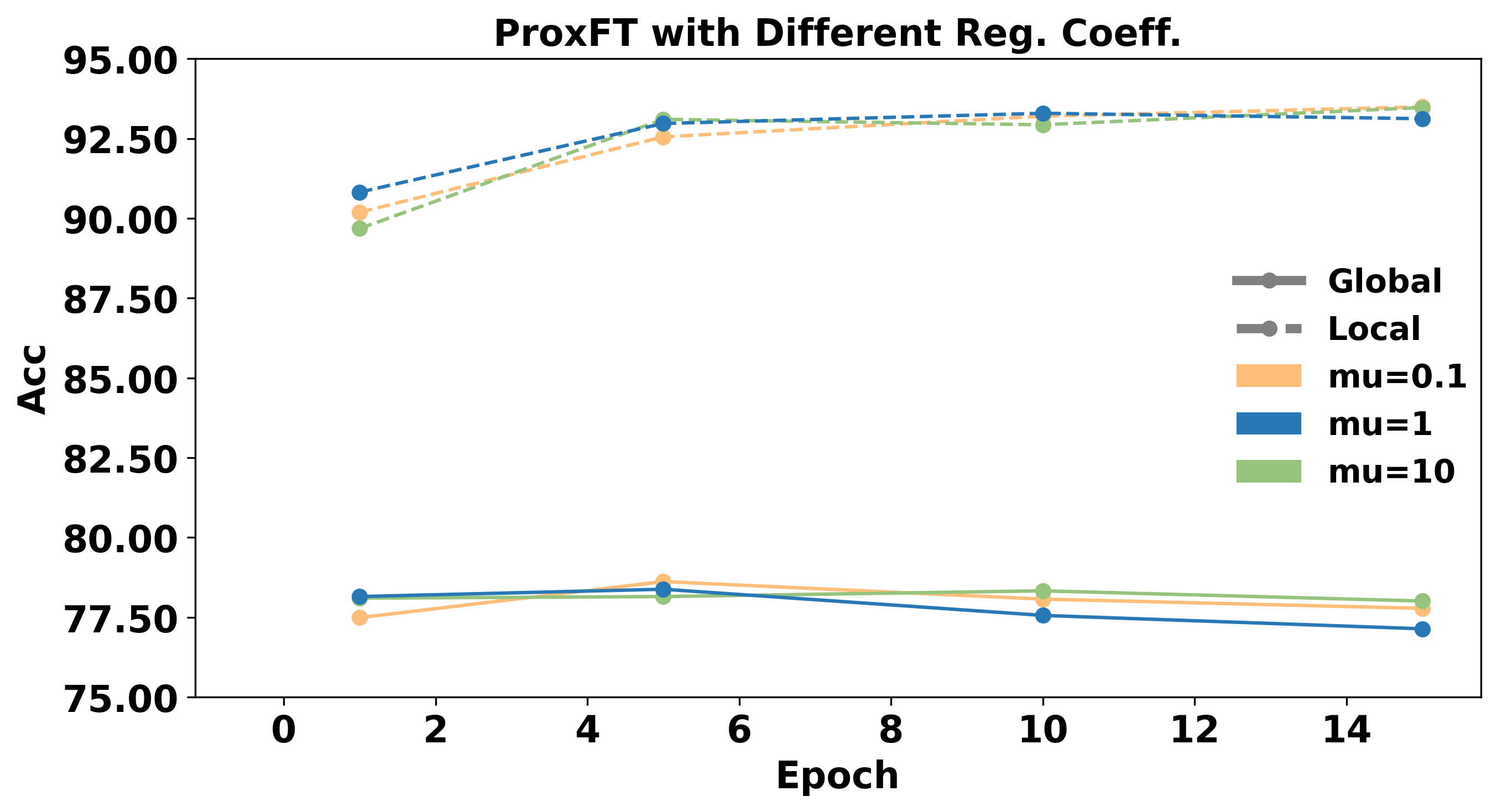}
  \caption{\textbf{Proximal FT}}
  \label{fig:easy_overfitting_sub3}
\end{subfigure}
\vspace{-8pt} % Reduce space between subfigures and main caption
\caption{Visualization of the prevalence of personalization overfitting across different distribution shift scenarios, where (a) shows the global and local accuracy under different learning rates for full-parameter fine-tune; (b) shows the different sparsity rate for sparse fine-tune; (c) shows the different regularization strength under the proximal fine-tune. In all subfigures, the global accuracy is shown as the solid line, and the local accuracy is shown as the dashed line. As shown, global accuracy consistently declines while local accuracy either increases or remains stable across different hyperparameter settings. This suggests that PFT baseline methods are prone to overfitting, even with careful hyperparameter tuning.}
\label{fig:overfitting}
\end{figure*}

\subsection{Performance Comparison}
\label{subsec:baseline}
\begin{table*}[t]
\centering
\small
\renewcommand{\arraystretch}{0.99} % Adjust for tighter spacing
\caption{Performance of various PFT strategies. \textbf{\textcolor{ired}{Red}} represents  the \textit{input shift} subgroup; \textbf{\textcolor{igreen}{green}} from the \textit{feature-shift} subgroup; \textbf{\textcolor{iblue}{blue}} the \textit{spurious correlation-based shift} subgroup. Each experiment is performed three times independently with different random seeds, and the standard deviation of the results is presented in parentheses. $\uparrow$ indicates that higher values are better, while $\downarrow$ indicates that lower values are better.}
% \resizebox{0.7\linewidth}{!}{%
\begin{tabular}{l l l l l l l l}
\toprule
\textbf{Dataset} & \textbf{Method} & \textbf{Local $\uparrow$} & \textbf{Global $\uparrow$} & \textbf{C-Std. $\downarrow$} & \textbf{Worst $\uparrow$} & \textbf{Average $\uparrow$} \\ 
\midrule
\multirow{5}{*}{\textcolor{ired}{\textbf{CIFAR10-C}}}
 & FT & 54.50 (0.64) & 44.16 (0.13) & \textbf{10.04 (0.06)} & 19.83 (0.18) & 39.50 (0.33) \\ 
& Proximal FT & 61.76 (0.13) & 53.58 (0.14) & 11.61 (0.08) & 25.82 (0.12) & 47.05 (0.07) \\ 
 & Soup FT & 56.36 (0.23) & 44.94 (0.06) & 10.22 (0.06) & 20.47 (0.35) & 40.59 (0.09) \\ 
 & Sparse FT & 61.31 (0.01) & 50.21 (0.17) & 11.10 (0.11) & 24.56 (0.09) & 45.36 (0.04) \\ 
 & LSS FT & 56.21 (0.33) & 46.81 (0.04) & 10.05 (0.08) & 21.61 (0.37) & 43.67 (0.08) \\ 
 & LP-FT & \textbf{63.55 (0.04)} & \textbf{55.35 (0.01)} & 12.45 (0.01) & \textbf{26.33 (0.06)} & \textbf{48.41 (0.03)} \\ 
\cmidrule{2-7}
  \multirow{5}{*}{\textcolor{ired}{\textbf{CIFAR100-C}}} 
 & FT & 20.05 (0.05) & 14.45 (0.04) & \textbf{5.37 (0.02)} & 3.37 (0.06) & 12.62 (0.03) \\ 
 & Proximal FT & 27.38 (0.15) & 19.96 (0.11) & 6.90 (0.04) & 4.84 (0.04) & 17.41 (0.05) \\ 
 & Soup FT & 20.99 (0.24) & 14.81 (0.04) & 5.48 (0.03) & 3.56 (0.01) & 13.12 (0.06) \\ 
 & Sparse FT & 28.93 (0.04) & 20.66 (0.02) & 7.75 (0.02) & 5.05 (0.09) & 18.15 (0.10) \\ 
 & LSS FT & 20.54 (0.19) & 15.42 (0.03) & 5.32 (0.03) & 3.62 (0.01) & 14.22 (0.06) \\ 
 & LP-FT & \textbf{32.60 (0.14)} & \textbf{25.44 (0.10)} & 9.66 (0.04) & \textbf{5.92 (0.06)} & \textbf{21.32 (0.04)} \\ 
\midrule

\multirow{5}{*}{\textcolor{igreen}{\textbf{Digit5}}} 
 & FT & 91.17 (0.90) & 67.87 (0.74) & 22.93 (0.28) & 42.03 (0.48) & 67.02 (0.70) \\ 
 & Proximal FT & \textbf{92.09 (0.18)} & 81.40 (0.03) & 15.04 (0.15) & 61.71 (0.16) & 78.40 (0.09) \\ 
 & Soup FT & 91.82 (0.34) & 70.82 (0.43) & 21.99 (0.67) & 45.10 (1.27) & 69.02 (0.65) \\ 
 & Sparse FT & 91.43 (0.31) & 76.89 (0.72) & 17.90 (0.38) & 54.21 (0.56) & 74.21 (0.35) \\
 & LSS FT & 91.59 (0.28) & 73.13 (0.30) & 22.04 (0.53) & 45.32 (1.13) & 71.15 (0.53) \\ 
 & LP-FT & 91.20 (0.04) & \textbf{82.78 (0.05)} & \textbf{13.75 (0.02)} & \textbf{65.80 (0.02)} & \textbf{79.92 (0.02)} \\
\cmidrule{2-7}
 \multirow{5}{*}{\textcolor{igreen}{\textbf{DomainNet}}} 
 & FT & 64.90 (1.18) & 42.48 (0.58) & 17.49 (0.75) & 22.31 (0.93) & 43.23 (0.52) \\ 
 & Proximal FT & 67.20 (1.39) & 56.05 (0.27) & \textbf{16.68 (0.36)} & 33.20 (1.79) & 52.60 (0.35) \\ 
 & Soup FT & 67.48 (0.61) & 44.27 (0.46) & 18.44 (0.42) & 23.73 (1.24) & 44.49 (0.54) \\ 
 & Sparse FT & \textbf{69.62 (0.53)} & 50.24 (0.44) & 18.14 (0.17) & 27.89 (0.15) & 49.14 (0.45) \\
 & LSS FT & 66.37 (0.53) & 45.34 (0.40) & 18.02 (0.38) & 22.63 (1.05) & 45.75 (0.42) \\ 
 & LP-FT & 68.50 (0.19) & \textbf{57.52 (0.20)} & 17.36 (0.21) & \textbf{34.53 (0.44)} & \textbf{53.52 (0.19)} \\ 
\midrule
\multirow{5}{*}{\textcolor{iblue}{\textbf{CheXpert}}} 
 & FT & 76.18 (0.41) & 76.25 (0.56) & 0.35 (0.13) & 76.31 (0.76) & 76.25 (0.44) \\ 
 & Proximal FT & 76.44 (0.07) & 76.63 (0.09) & 0.71 (0.09) & 76.81 (0.07) & 76.63 (0.07) \\ 
 & Soup FT & 77.51 (0.15) & 77.49 (0.31) & 0.48 (0.07) & 77.46 (0.43) & 77.49 (0.26) \\ 
 & Sparse FT & 77.29 (0.13) & 77.20 (0.14) & \textbf{0.31 (0.11)} & 77.11 (0.25) & 77.20 (0.14) \\ 
 & LSS FT & 77.49 (0.14) & 77.51 (0.28) & 0.52 (0.08) & \textbf{77.53 (0.37)} & 77.52 (0.24) \\ 
 & LP-FT & \textbf{77.64 (0.37)} & \textbf{77.54 (0.37)} & 0.53 (0.41) & 77.43 (0.71) & \textbf{77.54 (0.37)} \\ 
\cmidrule{2-7}
 \multirow{5}{*}{\textcolor{iblue}{\textbf{CelebA}}} 
 & FT & 90.55 (1.20) & 73.76 (2.15) & 18.79 (3.64) & 53.52 (5.51) & 72.39 (2.84) \\ 
 & Proximal FT & \textbf{93.74 (0.59)} & 81.11 (0.82) & 13.39 (1.14) & 67.50 (2.10) & 80.78 (0.90) \\ 
 & Soup FT & 89.42 (2.16) & 75.28 (1.11) & 16.29 (1.19) & 57.79 (2.90) & 74.17 (1.50) \\ 
 & Sparse FT & 91.43 (0.48) & 77.32 (1.46) & 14.16 (2.57) & 62.94 (4.34) & 77.65 (1.65) \\
 & LSS FT & 89.17 (2.05) & 77.35 (1.03) & 16.23 (1.28) & 59.64 (2.86) & 76.74 (1.46) \\ 
 & LP-FT & 93.24 (0.17) & \textbf{83.32 (0.31)} & \textbf{11.18 (0.14)} & \textbf{71.89 (0.75)} & \textbf{82.82 (0.64)} \\ 

% \midrule
% \multirow{5}{*}{\textcolor{iyellow}{\textbf{CIFAR10}}} 
%  & FT & 87.20 (0.24) & 16.67 (0.14) & \textbf{26.81 (1.47)} & 0.00 (0.00) & 34.62 (0.09) \\ 
%  & Proximal FT & 89.80 (1.21) & 17.42 (1.46) & 27.31 (0.08) & 0.00 (0.00) & 35.75 (0.09) \\ 
%  & Soup FT & 88.16 (0.15) & 16.94 (0.05) & 27.07 (0.11) & 0.00 (0.00) & 35.04 (0.04) \\ 
%  & Sparse FT & 89.16 (0.00) & 17.54 (0.14) & 27.27 (0.02) & 0.00 (0.00) & 35.57 (0.04) \\ 
%  & LP-FT & \textbf{90.15 (0.43)} & \textbf{17.73 (0.16)} & 27.37 (0.09) & 0.00 (0.00) & \textbf{35.96 (0.10)} \\ 
 
\bottomrule
\end{tabular}%
% }
\label{tab:baseline}
% \vspace{-5mm}
\end{table*}
% \footnotetext{The worst group accuracy on \texttt{CIFAR10} is not noted as all baseline FT results are very close to each other, where comparing STD, in this case, is less meaningful.}

\noindent\textbf{Linear Probing then Fine-Tuning.} 
To address the challenge of personalized overfitting in conventional fine-tuning methods within PFT, we propose a simple yet effective approach through Linear Probing followed by Fine-Tuning (LP-FT) for FL. The idea is motivated by LP-FT~\citep{DBLP:conf/iclr/KumarRJ0L22}—a two-phase fine-tuning strategy in centralized training that first updates \emph{only} the linear classifier (Linear Probing, LP) before optimizing all parameters (Full Fine-Tuning, FT) to improve out-of-domain performance while preserving in-domain performance. We adapt the strategy in PFT as follows: \textit{In practice, clients initialize weights from the model after GFL, first perform linear probing, and then fine-tune the full model as shown in Fig.~\ref{fig:intro} (b).} This LP-FT approach achieves strong personalization while maintaining generalizability across diverse clients. 

\noindent\textbf{Experimental Settings.}  To isolate the impact of PFT strategies and avoid conflating gains from GFL optimization, we standardize the GFL stage by fixing its method to FedAvg, the foundational and most widely used GFL method. Within this framework, we focus on comparing different \textit{post-hoc} FT methods to demonstrate the effectiveness of LP-FT in PFT (see Fig.~\ref{fig:intro} (a)).
%Specifically, FedAvg algorithm is firstly applied to train the global model in a GFL manner. 
After the GFL stage, all the clients further fine-tune the obtained global model using local data for 15 epochs for personalization. The final models are evaluated using the \textit{metrics} described below. Details of the datasets, preprocessing steps, data splitting, and models used are provided in App.~\ref{app:data_model}, Tab.~\ref{tab:datasets}. 

\noindent\textbf{Metrics.}
We adapt five metrics in our baseline experiments: \textit{(1) Local Accuracy (Local)} measures the performance of the PFT model on the client's local test set. Higher \textit{Local Acc} indicates better personalization. \textit{(2) Global Accuracy (Global)} measures the PFT model's average test accuracy over all other clients' test sets. Higher \textit{Global Acc} indicates better generalization. \textit{(3) Client-wise Standard Deviation (C-Std.)} calculates the standard deviation of local test accuracies across all clients. Lower \textit{C-Std.} indicates less variance in performance among clients. \textit{(4) Worst Accuracy (Worst)} reports the lowest test accuracy among all clients. The closer this value is to \textit{Local Acc}, the better the worst-case generalization. \textit{(5) Average} reports the average of both \textit{Local Acc} and \textit{Global Acc}, providing a better understanding of the tradeoff between personalization (local performance) and generalization (global performance). All metrics, except \textit{C-Std.}, are averaged over the number of clients, and higher values are preferable. For \textit{C-Std.}, lower values are better. 

\noindent\textbf{Results.}
Our results are presented in Tab.~\ref{tab:baseline}, where the best method is highlighted in \textbf{bold}. Datasets with the same distribution shift pattern are grouped into the same colors as detailed in the caption. Tab.~\ref{tab:baseline} shows that LP-FT consistently achieves the highest global and average accuracy across most datasets, demonstrating strong generalization and personalization performances, particularly in challenging conditions like \texttt{CIFAR100-C} and \texttt{CIFAR10-C}. Sparse FT also performs well, especially in \texttt{Digits5} and \texttt{DomainNet}, but generally lags behind LP-FT. LSS FT, Soup FT and Proximal FT show mixed results, with stronger performance in specific datasets such as \texttt{CheXpert} but weaker overall compared to LP-FT. Standard fine-tuning consistently underperforms, highlighting the limitations of basic fine-tuning methods in heterogeneous data scenarios. 

\subsection{Label Shift}
\label{apx:subsec:label_shift}
In this section, we examine the impact of LP-FT on the label shift scenario. We simulated label shift using a Dirichlet distribution with an alpha parameter set to 0.1. The dataset was distributed across 20 clients, and we trained a ResNet18 model for classification. Results is shown in Table~\ref{tab:cifar10}.

\begin{table}[htpb]
\centering
\begin{tabular}{@{}ccccccc@{}}
\toprule
\multirow{6}{*}{\textbf{CIFAR10}} 
 & \textbf{Baseline} & \textbf{Local} & \textbf{Global} & \textbf{C-Std.} & \textbf{Worst} & \textbf{Avg} \\ \cmidrule(l){2-7} 
 & FT & 87.20 (0.24) & 16.67 (0.14) & \textbf{26.81 (1.47)} & 0.01 (0.00) & 34.62 (0.09) \\ 
 & Proximal FT & 89.80 (1.21) & 17.42 (1.46) & 27.31 (0.08) & 0.01 (0.00) & 35.75 (0.09) \\ 
 & Soup FT & 88.16 (0.15) & 16.94 (0.05) & 27.07 (0.11) & 0.01 (0.00) & 35.04 (0.04) \\ 
 & Sparse FT & 89.16 (0.00) & 17.54 (0.14) & 27.27 (0.02) & 0.01 (0.00) & 35.57 (0.04) \\ 
 & LP-FT & \textbf{90.15 (0.43)} & \textbf{17.73 (0.16)} & 27.37 (0.09) & 0.01 (0.00) & \textbf{35.96 (0.10)} \\ \bottomrule
\end{tabular}
\caption{Comparison of PFT methods on CIFAR10 label shift setting.}
\label{tab:cifar10}
\end{table}

% \begin{table}[H]
% \centering
% \begin{tabular}{ccccccc}
% \hline
% \multirow{5}{*}{\textbf{CIFAR10}} 
%  & \textbf{Baseline} & \textbf{Local} & \textbf{Global} & \textbf{C-Std.} & \textbf{Worst} & \textbf{Avg} \\ \hline
%  & FT & 87.20 (0.24) & 16.67 (0.14) & \textbf{26.81 (1.47)} & 0.01 (0.00) & 34.62 (0.09) \\ \cline{2-7}
%  & Proximal FT & 89.80 (1.21) & 17.42 (1.46) & 27.31 (0.08) & 0.01 (0.00) & 35.75 (0.09) \\ \cline{2-7}
%  & Soup FT & 88.16 (0.15) & 16.94 (0.05) & 27.07 (0.11) & 0.01 (0.00) & 35.04 (0.04) \\ \cline{2-7}
%  & Sparse FT & 89.16 (0.00) & 17.54 (0.14) & 27.27 (0.02) & 0.01 (0.00) & 35.57 (0.04) \\ \cline{2-7}
%  & LP-FT & \textbf{90.15 (0.43)} & \textbf{17.73 (0.16)} & 27.37 (0.09) & 0.01 (0.00) & \textbf{35.96 (0.10)} \\ \hline
% \end{tabular}
% \caption{Comparison of PFT methods on CIFAR10 label shift setting.}
% \label{tab:cifar10}
% \end{table}

Tab.~\ref{tab:cifar10} compares different fine-tuning methods on CIFAR10 across multiple evaluation metrics, including local performance, global performance, robustness to corruption (C-Std.), worst-case performance, and average performance. Among the methods, LP-FT achieves the best results, excelling in local performance (90.15), global evaluation (17.73), and average performance (35.96). Proximal FT and Sparse FT also perform competitively, with improvements over standard fine-tuning (FT) and Soup FT in most metrics. All methods show near-identical performance in the worst-case scenario (0.01), indicating a shared limitation in extreme cases. Overall, LP-FT demonstrates the most robust and effective fine-tuning approach on CIFAR10.
\subsection{Insight and Explanation on the Observations}
\label{exp:insight}
Given the unique design of LP-FT, we hypothesize that its superior performance in PFT stems from its ability to mitigate federated feature distortion — a phenomenon where client-specific fine-tuning disrupts the global model's learned representations. We empirically validate this hypothesis through a systematic analysis of feature space dynamics across diverse data heterogeneity scenarios.

\noindent\textbf{Federated Feature Distortion.} 
Consider a feature extraction function \( f: \mathcal{X} \to \mathbb{R}^k \), which maps inputs from the input space \(\mathcal{X}\) to a representation space \(\mathbb{R}^k\). 
Let \(\theta_G\) denote the global pre-trained model and \(\theta_i\) the fine-tuned model after local fine-tuning for client \( i \). Assume there are \(C\) clients in total, each with \(n\) samples. Let \(x_{c,j}\) represent the \(j\)-th data point of the \(c\)-th client. The \textit{federated feature distortion} \( \Delta_c(f) \) quantifies the change in features after fine-tuning for the $c$-th client, defined as the average \(\ell_2\) distance between the representations produced by the global model and the locally fine-tuned model over all data points across all clients. Formally, it is expressed as:
$\Delta_c(f) = \frac{1}{n} \sum_{j=1}^{n} \left\| f(\theta_G; x_{c, j}) - f(\theta_c; x_{c, j}) \right\|_2,
$
where \(\| \cdot \|_2\) is the \(\ell_2\) distance in the representation space \(\mathbb{R}^k\). We compute the average of \(\Delta_c(f)\) across all clients to represent the feature distortion in the PFT setting, as shown in Fig.~\ref{fig:FD}. 

\noindent\textbf{Empirical Validation.} 
To quantify federated feature distortion, we measure the $\ell_2$ distance between global and locally fine-tuned feature representations using \texttt{DomainNet} and \texttt{Digit5}. As shown in Fig.~\ref{fig:FD}(a), the full FT method induces severe feature distortion, correlating with a significant drop in global accuracy, whereas LP-FT maintains stable global performance with lower distortion. 

To further isolate the effect of feature distortion from local loss magnitude, we apply loss flooding~\citep{DBLP:conf/icml/IshidaYS0S20} to control local training loss levels (0.1, 0.5, 1.0). Fig.~\ref{fig:FD}(b) shows that at fixed local loss levels, LP-FT consistently outperforms FT in global accuracy, confirming that its advantage stems from reduced feature distortion rather than differences in local optimization dynamics.

\begin{figure*}[htbp]
    \centering
    \begin{subfigure}{0.4\textwidth}
    \centering
\includegraphics[width=\textwidth]{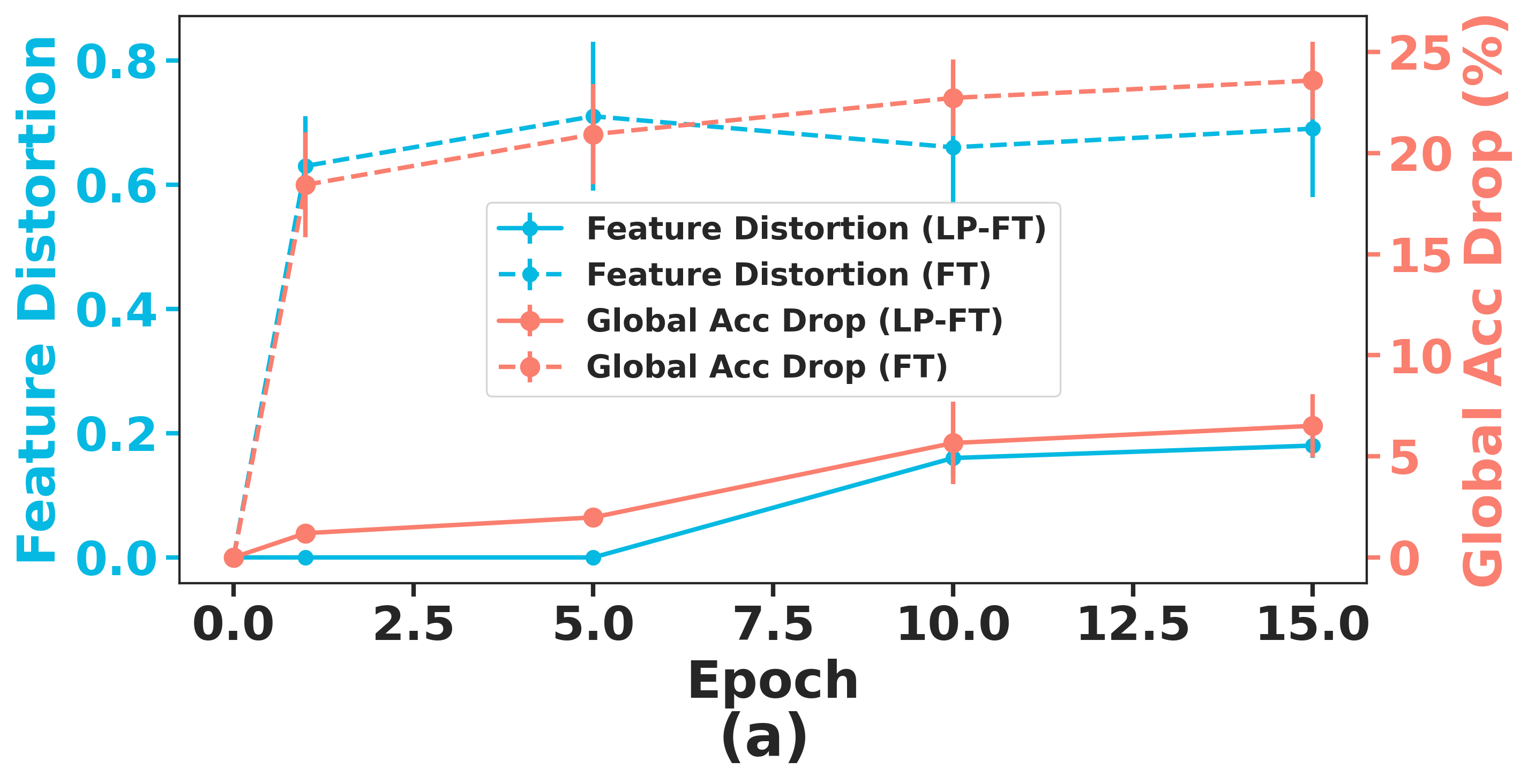}
    \label{fig:4a}
    \end{subfigure}
\quad
    \begin{subfigure}{0.4\textwidth}
    \centering
    \includegraphics[width=\textwidth]{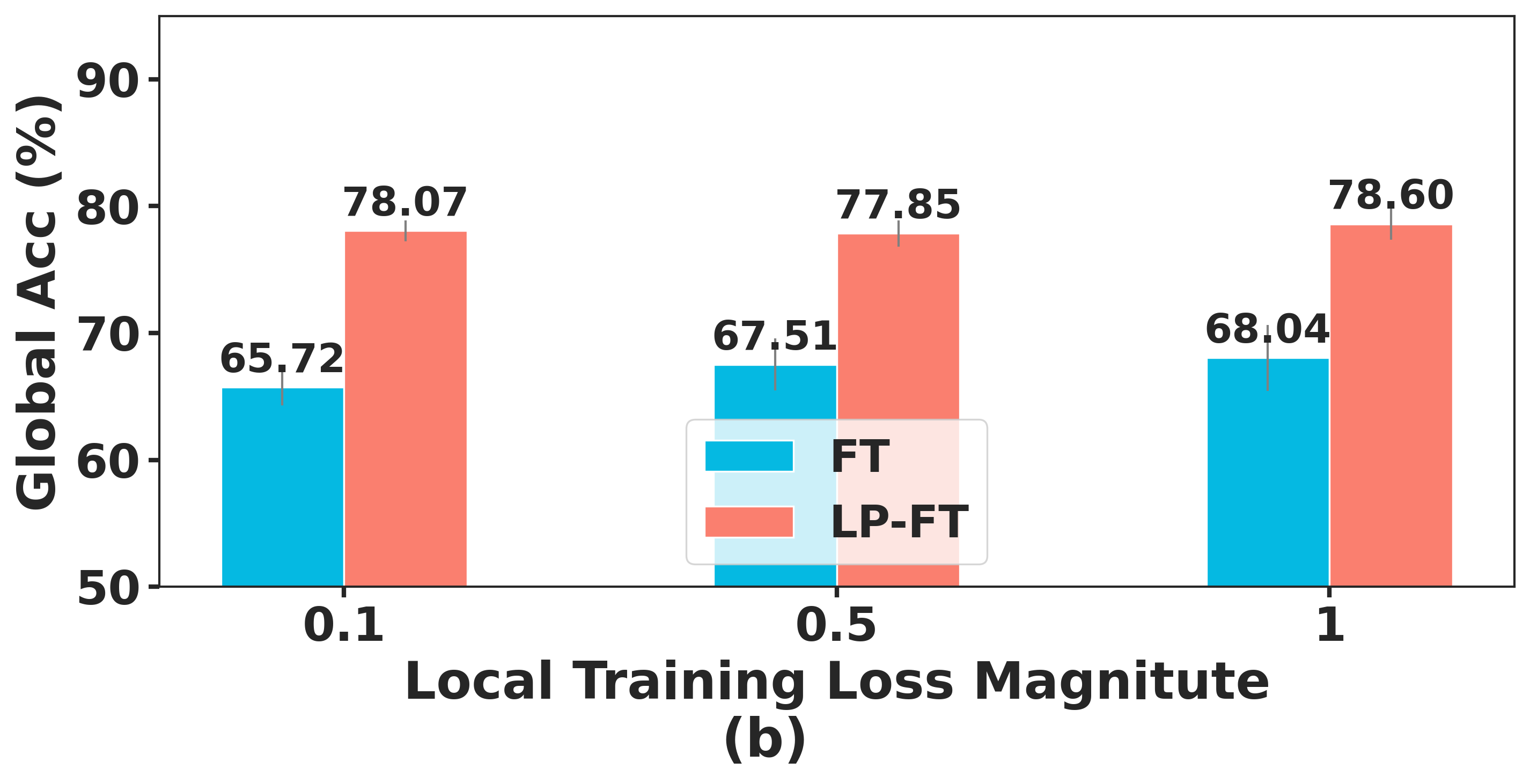}
    \label{fig:4b}
    \end{subfigure}
    \vspace{-7mm}
    \caption{Observations of the feature distortion in  PFT setting, where (a) presents the positive correlation between global performance drops and feature distortion intensity on \texttt{DomainNet} and (b) presents the ablation study on preserving federated features with controlled local train loss on \texttt{Digit5}. We set local loss thresholds (0.1, 0.5, and 1.0) and used gradient ascent when the loss fell below, ensuring training loss fluctuated around these points.}
    \label{fig:FD}
\end{figure*}

\section{Theoretical Analysis of the LP-FT in FL}
\label{sec:theory}
Building on our empirical observations in Sec.~\ref{sec: extensive empirical evaluation}, where LP-FT consistently outperforms baseline PFT methods and demonstrates a significant reduction in federated feature distortion, we now present a theoretical analysis to uncover the mechanistic principles underlying its success.
%In Sec.~\ref{sec: extensive empirical evaluation}, we presented a series of experiments demonstrating the effectiveness of LP-FT compared to baseline PFT strategies. Our results indicate that FT performs poorly relative to LP-FT, and we hypothesize that this suboptimal global performance arises from federated feature distortion. 
To understand how feature learning impacts generalization error in PFT, we decompose the data-generating function and the model into two components: a feature extractor and a linear head. This decomposition allows us to distinguish between the learned features and their influence on performance. 
%To further explain the empirical observations in Sec. \ref{sec: extensive empirical evaluation}, we provide a theoretical analysis of LP-FT's global performance under various distribution shift scenarios. 
Specifically, in Sec.~\ref{sec:LPFT_Concept_Shift} and Sec.~\ref{sec:LPFT_Concept_Shift_Cov_shift}, we formalize concept and covariate shifts within a two-layer linear network and examine how LP-FT effectively adapts to these shifts, outperforming full-parameter fine-tuning (FT) in FL.

\textbf{Overview of Theoretical Analysis:} To compare the performance of LP-FT and FT, we make assumptions about the data-generating function for clients (Assumption~\ref{assump:data_model}) and a specific model structure (Assumption~\ref{assump:model_structure}). Based on these assumptions, we analyze the global performance of LP-FT and FT under concept shift (Theorem~\ref{Thm: concept shift}) and combined concept-covariate shift (Theorem~\ref{Theorem: Covariate and concept shift}).

\subsection{LP-FT's Global Performance Under Concept Shift}\label{sec:LPFT_Concept_Shift}

In this section, we analyze LP-FT's performance compared to FT under concept shift. To facilitate a rigorous theoretical study, we define the data-generating process and model structure across clients, assuming both are represented by two-layer linear networks, as in \citep{DBLP:conf/iclr/KumarRJ0L22}.

\begin{assumption}[Data-Generating Process] \label{assump:data_model}
The data-generating function for client \( i \) is given by \( y_i = {V_i^{*}}^T B_* x_i \) for all \( i \in [C] \), where \( y_i \in \mathbb{R} \),  \( C \) is the number of clients, \( x_i \in \mathbb{R}^d \), \( B_* \in \mathbb{R}^{k \times d} \), and \( V_i^* \in \mathbb{R}^k \). All clients share a common feature extractor \( B_* \), assumed to have orthonormal rows, while their linear heads \( V_i^* \) differ. Each \( V_i^* \) decomposes as \( V_i^* = \begin{bmatrix} {V_{com}^*}^T & \lambda e_i^T \end{bmatrix}^T \), where \( {V_{com}^*} \in \mathbb{R}^m\) is shared across clients, \( e_i \in  \mathbb{R}^C\) is a unit vector, and \( \lambda \) controls heterogeneity. Here, \( m + C = k \). 
\end{assumption}

This assumption distinguishes between a shared and client-specific component in the data-generating functions, allowing analysis of both global and local performance of PFT methods after fine-tuning.

\begin{assumption}[Model Structure] \label{assump:model_structure}
The training model is a two-layer linear network defined as \( y = V^T B x \), where \( V \in \mathbb{R}^k \) is the linear head and \( B \in \mathbb{R}^{k \times d} \) is the feature extractor. The dimensions of \( V \) and \( B \) match Assumption \ref{assump:data_model}, allowing the model to learn both shared and client-specific data components.
\end{assumption}

In PFT settings, our objective is to evaluate the performance of a model on both global and local data. By local data, we refer to the data of a specific client undergoing fine-tuning (e.g., client \( i \)). The local and global losses are defined using the Mean Squared Error (MSE) as follows:
\allowdisplaybreaks
\begin{align*}
    \mathcal{L}_L(V, B) & = \mathbb{E}_{(x, y) \sim \mathcal{D}_i} 
    \bigl[ \frac{1}{2} (V^T B x - y)^2 \bigr]  = \mathbb{E}_{x \sim \mathcal{D}_i} 
    \bigl[ \frac{1}{2} \bigl(V^T B x - {V_i^*}^T B_* x\bigr)^2 \bigr], \\
    \mathcal{L}_G(V, B) & = \mathbb{E}_{(x, y) \sim \mathcal{D}_G} 
    \bigl[ \frac{1}{2} (V^T B x - y)^2 \bigr] = \frac{1}{C} \sum_{i \in [C]} \mathbb{E}_{x \sim \mathcal{D}_i} 
    \bigl[ \frac{1}{2} \bigl(V^T B x - {V_i^*}^T B_* x\bigr)^2 \bigr].
\end{align*}
% \begin{align*}
%     \mathcal{L}_L(V,B) & = \mathbb{E}_{(x,y)\sim \mathcal{D}_i} \Bigl[ \frac{1}{2} (V^T B x - y)^2 \Bigr] = \mathbb{E}_{x \sim \mathcal{D}_i} \Bigl[ \frac{1}{2} (V^T B x - {V_i^*}^T B_* x)^2 \Bigr], \\
%     \mathcal{L}_G(V,B) & = \mathbb{E}_{(x,y)\sim \mathcal{D}_G} \Bigl[ \frac{1}{2} (V^T B x - y)^2 \Bigr] = \frac{1}{C} \sum_{i \in [C]} \mathbb{E}_{x \sim \mathcal{D}_i} \Bigl[ \frac{1}{2} (V^T B x - {V_i^*}^T B_* x)^2 \Bigr].
% \end{align*}
Since this section focuses on concept shift, we assume all clients' data is drawn from similar distributions. Accordingly, we assume for every client \( i \in [C] \), the input features satisfy \( \mathbb{E}_{x \sim \mathcal{D}_i}[x x^T] = I_d \).

% With the theoretical setting established by Assumptions \ref{assump:data_model} and \ref{assump:model_structure}, we proceed to compare the global performance of LP-FT with FT. This comparison highlights scenarios where LP-FT may outperform FT. In a personalized federated learning setting, the initial model is trained on data from all clients and is expected to capture the shared component across them. Hence, we initialize the model parameters as \( B_0 = B_* \) and \( V_0 = \begin{bmatrix} {V_{com}^*}^T & \mathbf{0} \end{bmatrix}^T \). In the LP-FT method, at first one step of linear probing is performed on initial parameters $B_0$ and $V_0$ such that fixing the $B_0$ only $V_0$ is updated using local data. And followed by that one step of full fine-tuning is performed to update both linear head $V$ and feature extractor $B$. In FT, we only have the second step. The following lemma characterizes the feature extractor \( B \) after one gradient descent step in FT, forming the basis for the comparison.

With the theoretical framework established by Assumptions \ref{assump:data_model} and \ref{assump:model_structure}, we compare the global performance of LP-FT and FT, highlighting cases where LP-FT outperforms FT. As demonstrated in \cite{collins2022fedavg} FedAvg learns a shared data representation among clients if such a common representation exists. In a PFT setting, the initial model is trained on data from all clients to capture their shared components. Thus, we initialize the model parameters as 
$B_0 = B_* \quad \text{and} \quad V_0 = \begin{bmatrix} {V_{com}^*}^T & \mathbf{0} \end{bmatrix}^T$.  In LP-FT, a step of linear probing first updates \( V_0 \) using local data while keeping \( B_0 \) fixed, followed by full fine-tuning to update both \( V \) and \( B \). In contrast, FT performs only the second step. The following lemma characterizes \( B \) after one gradient descent step in FT, forming the basis for our comparison.

% \begin{lemma} \label{lem:gradient_ft}
% Under Assumptions \ref{assump:data_model} and \ref{assump:model_structure}, and assuming that \( \mathbb{E}_{x \sim \mathcal{D}_i}[x x^T] = I_d \)  for all clients \( i \in [C] \), let the initial parameters before starting FT be \( B_0 = B_* \) and \( V_0 = \begin{bmatrix} {V_{com}^*}^T & \mathbf{0} \end{bmatrix}^T \). Assume that the fine-tuning is performed locally on the data of $i$-th client. Let \( B_{FT} \) denote the feature extractor matrix after a single gradient descent step (processing the entire dataset once) with learning rate \( \eta \). Then, \( B_{FT} \) is given by:
% \[
% \mathbb{E}\Bigl[B_{FT}\Bigr] = \begin{bmatrix}
% {b_1^*}^T + \eta \lambda (V_{com}^*)_1 {b_{m+i}^*}^T \\
% \vdots \\
% {b_m^*}^T + \eta \lambda (V_{com}^*)_m {b_{m+i}^*}^T \\
% {b_{m+1}^*}^T \\
% \vdots \\
% {b_{m+C}^*}^T
% \end{bmatrix},
% \]
% where \( {b_j^*}^T \) is the \( j \)-th row of \( B_* \) for \( j \in [k] \), and \( (V^*_{com})_j \) is the \( j \)-th element of \( V^*_{com} \) for \( j \in [m] \).

% \end{lemma}

\begin{lemma} \label{lem:gradient_ft}
Under Assumptions \ref{assump:data_model} and \ref{assump:model_structure}, and assuming that \( \mathbb{E}_{x \sim \mathcal{D}_i}[x x^T] = I_d \) for all clients \( i \in [C] \), let the initial parameters before starting FT be \( B_0 = B_* \) and \( V_0 = \begin{bmatrix} {V_{com}^*}^T & \mathbf{0} \end{bmatrix}^T \). Assume fine-tuning is performed locally on the data of the \( i \)-th client. Let \( B_{FT} \) denote the feature extractor matrix after a single gradient descent step (processing the entire dataset once) with learning rate \( \eta \). If \( (b_j^{FT})^T \) is the \( j \)-th row of \( B_{FT} \), then:
\begin{equation*}
    \mathbb{E} \Bigl[(b_j^{FT})^T \Bigr] = (b_j^*)^T + \eta \lambda (V_{0})_j (b_{m+i}^*)^T,
\end{equation*}
where \((b_j^*)^T  \) is the \( j \)-th row of \( B_* \) , and \( (V_{0})_j \) is the \( j \)-th element of \( V_{0} \) for \( j \in [k] \).
\end{lemma}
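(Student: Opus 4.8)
The plan is to compute the gradient of the local loss with respect to the feature extractor $B$, evaluate it at the prescribed initialization $(B_0, V_0)$, and then read off a single full-batch gradient-descent step row by row. The entire argument is a direct computation; the only place the hypotheses do real work is in collapsing the empirical second-moment matrix to the identity.

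First I would write the local objective in residual form. Since $V^T B x$ and ${V_i^*}^T B_* x$ are scalars, I set $w = B^T V - B_*^T V_i^* \in \mathbb{R}^d$ so that the residual is $r(x) = w^T x$ and $\mathcal{L}_L = \tfrac{1}{2}\,\mathbb{E}_{x\sim\mathcal{D}_i}\bigl[(w^T x)^2\bigr]$. Differentiating $V^T B x$ with respect to $B$ gives the rank-one matrix $V x^T$, so the empirical gradient over the client's $n$ samples is $\nabla_B \mathcal{L}_L = V\,\bigl(\tfrac{1}{n}\sum_{j} r(x_{i,j})\,x_{i,j}^T\bigr) = V\, w^T \hat{\Sigma}$, where $\hat{\Sigma} = \tfrac{1}{n}\sum_{j} x_{i,j} x_{i,j}^T$ is the empirical covariance. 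Taking the expectation and using $\mathbb{E}[\hat{\Sigma}] = \mathbb{E}_{x\sim\mathcal{D}_i}[x x^T] = I_d$—together with the fact that $w$ is deterministic at the fixed initialization—yields $\mathbb{E}[\nabla_B \mathcal{L}_L] = V\, w^T$.

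Next I would substitute the initialization. At $B_0 = B_*$ and $V_0 = \begin{bmatrix}{V_{com}^*}^T & \mathbf{0}\end{bmatrix}^T$ the residual direction becomes $w = B_*^T(V_0 - V_i^*)$. Exploiting the block decomposition $V_i^* = \begin{bmatrix}{V_{com}^*}^T & \lambda e_i^T\end{bmatrix}^T$, the shared component $V_{com}^*$ cancels, so $V_0 - V_i^*$ is the vector in $\mathbb{R}^k$ equal to $-\lambda$ in coordinate $m+i$ and zero elsewhere. Hence $w = -\lambda\, B_*^T e_{m+i} = -\lambda\, b_{m+i}^*$, i.e.\ the column vector whose transpose is the $(m+i)$-th row of $B_*$. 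Substituting back gives $\mathbb{E}[\nabla_B \mathcal{L}_L] = -\lambda\, V_0\,(b_{m+i}^*)^T$.

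Finally, the gradient step $B_{FT} = B_0 - \eta\,\nabla_B \mathcal{L}_L$ gives, in expectation, $\mathbb{E}[B_{FT}] = B_* + \eta\lambda\, V_0\,(b_{m+i}^*)^T$; extracting the $j$-th row—noting that $V_0\,(b_{m+i}^*)^T$ has $j$-th row $(V_0)_j\,(b_{m+i}^*)^T$—produces the claimed identity. I expect no serious obstacle here: the one subtlety is keeping the index bookkeeping of the block structure straight so that the heterogeneity perturbation lands precisely in coordinate $m+i$, and justifying the interchange of expectation and gradient so that $\hat{\Sigma}$ may be replaced by $I_d$. Note that the orthonormality of the rows of $B_*$ is \emph{not} needed for this lemma—only $\mathbb{E}[x x^T] = I_d$ is—although it will matter for the downstream global-performance comparison between LP-FT and FT.
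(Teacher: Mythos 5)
Your proof is correct and follows essentially the same route as the paper's: both evaluate the gradient of the local MSE loss with respect to $B$ at the initialization, use $\mathbb{E}_{x\sim\mathcal{D}_i}[xx^T]=I_d$ to collapse the empirical second moment, and exploit the block structure of $V_0 - V_i^*$ (equal to $-\lambda$ in coordinate $m+i$, zero elsewhere) so that the rank-one update lands on $(b_{m+i}^*)^T$; you merely organize the computation in matrix form rather than row by row as the paper does. Your closing remark is also accurate: the paper's proof invokes orthonormality of the rows of $B_*$ only when simplifying the gradient with respect to $V$ (a side computation for $V_{FT}$ used in the later theorem), not for the rows of $B_{FT}$ asserted in the lemma.
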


This lemma examines the impact of FT on the feature extractor \( B_{FT} \), highlighting the deviations from the pre-trained matrix \( B_0 = B_* \). Given that all clients share the same \( B_* \) in their labeling functions, substantial changes to the feature extractor can degrade global performance. Since the matrix \( B \) functions as the feature extractor in our framework, significant feature distortion occurs when \( B_{FT} \) deviates considerably from \( B_* \). Building on Lemma \ref{lem:gradient_ft}, Theorem \ref{Thm: concept shift} offers a comparative analysis of the global performance of LP-FT versus FT in the context of concept shift.

\begin{theorem}\label{Thm: concept shift}
Under Assumptions \ref{assump:data_model} and \ref{assump:model_structure}, and assuming \( \mathbb{E}_{x \sim \mathcal{D}_i}[x x^T] = I_d \) for all clients \( i \in [C] \), let the initial model parameters be \( B_0 = B_* \) and \( V_0 = \begin{bmatrix} {V_{com}^*}^T & \mathbf{0} \end{bmatrix}^T \). Let \( B_{FT} \) and \( V_{FT} \) denote the parameters of the FT method after one gradient descent step (processing the entire dataset once). For LP-FT, let \( B_{LPFT} \) and \( V_{LPFT} \) denote the parameters after (i) linear probing, which optimizes \( V \) with \( B \) fixed at \( B_* \), and (ii) one gradient descent step with learning rate \( \eta \). Then:
\[
\mathcal{L}_G(V_{LPFT}, B_{LPFT}) \leq  \mathcal{L}_G(V_{FT}, B_{FT}).
\]
\end{theorem}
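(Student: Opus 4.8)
The plan is to collapse the whole comparison onto the single ``effective predictor'' $w := B^\top V \in \mathbb{R}^d$. Because $\mathbb{E}_{x\sim\mathcal{D}_i}[xx^\top]=I_d$ for every client, each MSE term equals a squared Euclidean distance, so $\mathcal{L}_G(V,B)=\frac{1}{2C}\sum_{i'\in[C]}\|B^\top V - B_*^\top V_{i'}^*\|_2^2$ depends on $(V,B)$ only through $w$. Setting $w_0 := B_*^\top V_0$ and using $B_*B_*^\top=I_k$ together with $V_{i'}^*=[\,{V_{com}^*}^\top\ \ \lambda e_{i'}^\top\,]^\top$, I would first record the key structural identity $w_{i'}^* := B_*^\top V_{i'}^* = w_0 + \lambda\, b_{m+i'}^*$, where $b_{m+i'}^*=B_*^\top e_{m+i'}$ are mutually orthonormal. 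Thus all targets sit at distance $\lambda$ from $w_0$ along orthonormal directions, which is what renders the comparison essentially one-dimensional.

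Next I would pin down the two endpoints. For FT I would combine Lemma~\ref{lem:gradient_ft} (for $B_{FT}$) with the companion head step $V_{FT}=V_0+\eta\lambda e_{m+i}$ and expand $w_{FT}=B_{FT}^\top V_{FT}$; using $V_0^\top e_{m+i}=0$ and $B_*^\top e_{m+i}=b_{m+i}^*$ the cross terms collapse to $w_{FT}=w_0+\eta\lambda\bigl(1+\|V_{com}^*\|_2^2\bigr)\,b_{m+i}^*$. I would emphasize that the summand $\eta\lambda\|V_{com}^*\|_2^2\,b_{m+i}^*$ is precisely the contribution of the distorted feature extractor. For LP-FT, linear probing optimizes $V$ with $B$ frozen at $B_*$; since $\|B_*^\top V - B_*^\top V_i^*\|_2^2=\|V-V_i^*\|_2^2$ (as $B_*^\top$ is an isometry on $\mathbb{R}^k$), the probe recovers $V=V_i^*$ and drives the local residual to zero, so the subsequent FT step has vanishing gradient and leaves the parameters untouched. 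Hence $B_{LPFT}=B_*$ (no feature distortion) and $w_{LPFT}=w_0+\lambda\, b_{m+i}^*$.

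Writing each endpoint as $w=w_0+c\,b_{m+i}^*$ (with $c_{FT}=\eta\lambda(1+\|V_{com}^*\|_2^2)$ and $c_{LPFT}=\lambda$) and exploiting orthonormality of the $b_{m+i'}^*$, I would show the global loss restricted to this ray is the convex scalar parabola $g(c)=\tfrac12 c^2-\tfrac{\lambda}{C}c+\tfrac{\lambda^2}{2}$, minimized at $c^\star=\lambda/C$. The statement then follows from the factorization $g(c_{FT})-g(c_{LPFT})=\tfrac12\,(c_{FT}-c_{LPFT})\,(c_{FT}+c_{LPFT}-2\lambda/C)$: the second factor is positive whenever $C\ge 2$, so the desired inequality $\mathcal{L}_G(V_{LPFT},B_{LPFT})\le\mathcal{L}_G(V_{FT},B_{FT})$ is equivalent to $c_{FT}\ge c_{LPFT}$.

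The hard part will be certifying the scalar inequality $c_{FT}\ge c_{LPFT}$, i.e. $\eta\bigl(1+\|V_{com}^*\|_2^2\bigr)\ge 1$. This is the only place where the two methods genuinely separate, and it is exactly where the extra feature-distortion term isolated in the FT computation does the work: by preserving $B=B_*$, LP-FT lands at the pure head-adaptation point $c=\lambda$, whereas FT's feature distortion pushes $c_{FT}$ past it, farther from the shared-feature optimum $\lambda/C$. I would therefore expect the clean conclusion to hinge on a mild lower bound on the step size $\eta$ (relative to $1+\|V_{com}^*\|_2^2$); establishing that bound — rather than any further linear algebra — is the crux, and I would look to the paper's learning-rate conventions to discharge it.
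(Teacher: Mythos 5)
Your reduction is carried out correctly, and it is in substance the same computation the paper performs: both arguments collapse everything onto the effective predictor $B^\top V$, use the identity second moment together with the orthonormal rows of $B_*$ to write the global loss as a function of the single coefficient $c$ along the direction $b_{m+i}^*$, and arrive at $c_{FT}=\eta\lambda\bigl(1+\|V_{com}^*\|_2^2\bigr)$ and $c_{LPFT}=\lambda$ with $g(c)=\tfrac12 c^2-\tfrac{\lambda}{C}c+\tfrac{\lambda^2}{2}$. Your FT endpoint reproduces the paper's formula $\mathcal{L}_G(V_{FT},B_{FT})=\frac{\lambda^2}{2C}\bigl[(\eta(1+S)-1)^2+(C-1)\bigl((\eta(1+S))^2+1\bigr)\bigr]$ exactly (writing $S=\|V_{com}^*\|_2^2$), and your LP-FT endpoint agrees with the paper's limit $\alpha\to 1$ in which linear probing is run to convergence.

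The gap you flag at the end, however, cannot be discharged: the paper imposes no lower bound on $\eta$, and the inequality $\eta\bigl(1+\|V_{com}^*\|_2^2\bigr)\geq 1$ is not implied by the hypotheses. Indeed your own parabola shows the stated conclusion fails for small step sizes: take $C=3$, $\lambda=1$, $V_{com}^*=0$, $\eta=0.1$; then $c_{FT}=0.1$, $c_{LPFT}=1$, and $g(0.1)\approx 0.47 < g(1)=2/3$, so FT has strictly smaller global loss than LP-FT. The reason the paper nevertheless reaches an unconditional conclusion is an algebraic slip, not an idea you are missing: when computing the LP-FT loss it evaluates, for clients $i\neq 1$, the quantity $\|\lambda A_\alpha b_{m+1}^*-\lambda b_{m+i}^*\|_2^2$ as $\lambda^2 A_\alpha^2$, dropping the orthogonal contribution $+\lambda^2$ coming from $-\lambda b_{m+i}^*$, whereas the corresponding $+1$ is correctly retained in the FT computation. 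Restoring the dropped term gives $\mathcal{L}_G(V_{LPFT},B_{LPFT})=\frac{\lambda^2}{2C}\bigl[(A_\alpha-1)^2+(C-1)(A_\alpha^2+1)\bigr]\to\frac{\lambda^2(C-1)}{C}$ as $\alpha\to 1$, and the comparison with FT then reduces to exactly your scalar condition $\eta\bigl(1+\|V_{com}^*\|_2^2\bigr)\geq 1$. So your proof attempt is incomplete only because the theorem, as stated, genuinely requires this extra step-size hypothesis; your derivation is the correct one, and it exposes the error in the paper's own proof.
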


This theorem characterizes the global performance of LP-FT, suggesting that under concept shift, LP-FT achieves better performance on global data than FT. When starting from a model initialized to capture the shared feature extractor and linear head among clients, LP-FT is more effective in minimizing global loss, aligning with common FL scenarios where the initial model leverages shared client structure.

\subsection{LP-FT's Global Performance under Combined Concept and Covariate Shifts}\label{sec:LPFT_Concept_Shift_Cov_shift}

In the previous section, we assumed all clients' data came from the same distribution with \( \mathbb{E}_{x \sim \mathcal{D}_i}[x x^T] = I_d \). However, this may not hold in many practical scenarios. To address this, we introduce covariate shift, where each client's data is generated as \( x_i = e_i + \epsilon z \), with \( z \sim \mathcal{N}(0, I) \), \( e_i \) as a client-specific shift, and \( \epsilon \) controlling the noise level. This extension captures the non-iid nature of data among clients and provides a framework to model data heterogeneity. The model structure and data-generating assumptions remain consistent with Sec. \ref{sec:LPFT_Concept_Shift}. This section thus considers both concept and covariate shifts. Theorem \ref{Theorem: Covariate and concept shift} analyzes the impact of heterogeneity on the global performance of LP-FT and~FT.

% \begin{theorem}\label{Theorem: Covariate and concept shift}
% Under Assumptions \ref{assump:data_model} and \ref{assump:model_structure}, let the data for each client \( i \in [C] \) be given by \( x_i = e_i + \epsilon n \), where \( n \sim \mathcal{N}(0, \sigma^2 I) \) and \( e_i \) represents a client-specific shift. Assume the initial model parameters are \( B_0 = B_* \) and \( V_0 = \begin{bmatrix} {V_{com}^*}^T & \mathbf{0} \end{bmatrix}^T \). Let \( B_{FT} \) and \( V_{FT} \) be the parameters of the FT method after one gradient descent step (processing the entire dataset once). For LP-FT, let \( B_{LPFT} \) and \( V_{LPFT} \) denote the parameters after (i) linear probing, which optimizes \( V \) with \( B \) fixed at \( B_* \), and (ii) one gradient descent step with learning rate \( \eta \). Then, there exists a threshold \( \lambda^* \) such that for all \( \lambda \leq \lambda^* \):
% \[
% \mathcal{L}_G(V_{LPFT}, B_{LPFT}) < \mathcal{L}_G(V_{FT}, B_{FT}).
% \]
% \end{theorem}

\begin{theorem}\label{Theorem: Covariate and concept shift}
Under Assumptions \ref{assump:data_model} and \ref{assump:model_structure}, let each client's data be \( x_i = e_i + \epsilon z \), where \( z \sim \mathcal{N}(0,  I) \) and \( e_i \) is a client-specific shift. Assume the initial parameters are \( B_0 = B_* \) and \( V_0 = \begin{bmatrix} {V_{com}^*}^T & \mathbf{0} \end{bmatrix}^T \). Let \( B_{FT}, V_{FT} \) be the FT parameters after one gradient descent step, and \( B_{LPFT}, V_{LPFT} \) be the LP-FT parameters after linear probing and one gradient descent step (with learning rate \( \eta \)). Then, there exists a threshold \( \lambda^* \) such that for all \( \lambda \leq \lambda^* \):
\[
\mathcal{L}_G(V_{LPFT}, B_{LPFT}) \leq \mathcal{L}_G(V_{FT}, B_{FT}).
\]
\end{theorem}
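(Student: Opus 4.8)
The plan is to determine the LP-FT endpoint \emph{exactly}, expand the FT global loss as a polynomial in $\lambda$, and reduce the claim to a comparison of their $\lambda^2$ coefficients, with the threshold $\lambda^*$ introduced only to absorb higher-order terms. First I would compute the LP-FT parameters. Write $\Sigma_i := \mathbb{E}_{x\sim\mathcal{D}_i}[xx^\top] = e_i e_i^\top + \epsilon^2 I$. With $B$ frozen at $B_*$, linear probing minimizes the strictly convex quadratic $\tfrac12 (V-V_i^*)^\top B_*\Sigma_i B_*^\top (V-V_i^*)$; since $B_*$ has orthonormal rows, $B_*\Sigma_i B_*^\top = (B_*e_i)(B_*e_i)^\top + \epsilon^2 I_k \succ 0$, so linear probing recovers the head exactly, $V_{LP}=V_i^*$, and drives the local loss to zero. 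The subsequent full-model gradient step then has vanishing gradient and changes nothing, so $V_{LPFT}=V_i^*$ and $B_{LPFT}=B_*$ — i.e. LP-FT incurs no feature distortion. Substituting into $\mathcal{L}_G$ and using $(V_i^*-V_{i'}^*)^\top B_* = \lambda(b_{m+i}^* - b_{m+i'}^*)^\top$ gives the exact expression
\[
\mathcal{L}_G(V_{LPFT},B_{LPFT}) = \frac{\lambda^2}{2C}\sum_{i'\in[C]}(b_{m+i}^*-b_{m+i'}^*)^\top \Sigma_{i'}(b_{m+i}^*-b_{m+i'}^*) =: c_{\mathrm{LPFT}}\,\lambda^2 .
\]

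Next I would treat FT. Rerunning the one-step gradient computation behind Lemma~\ref{lem:gradient_ft} with $\mathbb{E}[xx^\top]=\Sigma_i$ in place of $I$ gives $V_{FT}=V_0+\eta\lambda\, B_*\Sigma_i b_{m+i}^*$ and $B_{FT}=B_*+\eta\lambda\, V_0 (b_{m+i}^*)^\top\Sigma_i$; here $B_{FT}\neq B_*$ is precisely the feature distortion that LP-FT avoids. Since each factor is affine in $\lambda$, the product $V_{FT}^\top B_{FT}$ is quadratic in $\lambda$, the client-$i'$ residual vanishes at $\lambda=0$, and $\mathcal{L}_G(V_{FT},B_{FT})$ is a degree-four polynomial with no constant or linear term,
\[
\mathcal{L}_G(V_{FT},B_{FT}) = c_{\mathrm{FT}}\,\lambda^2 + c_3\,\lambda^3 + c_4\,\lambda^4 ,
\]
whose leading coefficient is $c_{\mathrm{FT}}=\tfrac{1}{2C}\sum_{i'} w_{i,i'}^\top \Sigma_{i'} w_{i,i'}$ with $w_{i,i'}=\eta\bigl(B_*^\top B_* + \|V_{com}^*\|^2 I\bigr)\Sigma_i b_{m+i}^* - b_{m+i'}^*$.

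The theorem then reduces to showing $D(\lambda):=\mathcal{L}_G(V_{FT},B_{FT})-\mathcal{L}_G(V_{LPFT},B_{LPFT}) = (c_{\mathrm{FT}}-c_{\mathrm{LPFT}})\lambda^2 + c_3\lambda^3 + c_4\lambda^4 \ge 0$ for small $\lambda$. I would first establish the leading gap $c_{\mathrm{FT}}\ge c_{\mathrm{LPFT}}$: in the isotropic limit $e_i=0$ (so $\Sigma_i=\epsilon^2 I$) both quadratic forms diagonalize over the orthonormal rows of $B_*$ and the comparison collapses to exactly the scalar inequality proved in Theorem~\ref{Thm: concept shift}, so that result supplies both the baseline inequality and the learning-rate regime in which it holds; the rank-one covariate terms $e_i e_i^\top$ are then handled as a correction to this quadratic-form comparison. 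Given $c_{\mathrm{FT}}-c_{\mathrm{LPFT}}\ge 0$, I would bound $|c_3|,|c_4|$ by a constant $M$ depending on $\eta,\epsilon,\{e_i\},B_*,V^*_{com}$ and set $\lambda^* := (c_{\mathrm{FT}}-c_{\mathrm{LPFT}})/(2M)$ (or any value forcing the $\lambda^2$ term to dominate the cubic and quartic remainders), yielding $D(\lambda)\ge 0$, hence the claim, for all $\lambda\le\lambda^*$.

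I expect the main obstacle to be the leading-coefficient inequality $c_{\mathrm{FT}}\ge c_{\mathrm{LPFT}}$ in the anisotropic regime. Unlike the concept-shift case, where $\Sigma_i=I$ lets the two quadratic forms be compared direction by direction and reduces everything to a scalar condition in $\eta(1+\|V_{com}^*\|^2)$, the terms $e_ie_i^\top$ couple the feature directions, so the comparison becomes a genuine matrix inequality; establishing its sign — and identifying the learning-rate condition inherited from Theorem~\ref{Thm: concept shift} under which it holds — is the delicate step. The remaining work, namely the exact one-step FT expansion and the uniform bound on $c_3,c_4$ needed to exhibit $\lambda^*$, is routine but bookkeeping-heavy.
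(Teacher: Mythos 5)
Your reduction of the theorem is, in fact, a sounder route than the paper's own proof. Your computation of the LP-FT endpoint ($V_{LPFT}=V_i^*$, $B_{LPFT}=B_*$, global loss exactly $c_{\mathrm{LPFT}}\lambda^2$) and your one-step FT update agree with the paper's calculations, and your key structural claim --- that $\mathcal{L}_G(V_{FT},B_{FT})$ is a polynomial in $\lambda$ with \emph{no constant or linear term} --- is correct. This is precisely where the paper goes a different way: in equation~(\ref{norm_cov}) the term $\sum_{j=1}^m (V^*_{com})_j b_j^*$ arising from $B_{FT}^T V_{FT}$ is never cancelled against the identical term in $B_*^T V_i^*$, and the paper's decisive claim that the FT loss ``does not converge to zero as $\lambda$ approaches zero due to the presence of constant terms'' rests entirely on that uncancelled term. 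Once the cancellation is performed, every surviving term carries a factor of $\lambda$, both losses are $O(\lambda^2)$, and the theorem genuinely reduces --- exactly as you say --- to the leading-coefficient comparison $c_{\mathrm{FT}}\ge c_{\mathrm{LPFT}}$, with $\lambda^*$ absorbing the cubic and quartic remainders.

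The problem is the step you defer. The inequality $c_{\mathrm{FT}}\ge c_{\mathrm{LPFT}}$ is not merely ``delicate''; it is false without additional hypotheses, so your plan cannot be completed as written. Concretely, take the client shifts orthogonal to the row space of $B_*$, e.g.\ $B_*=\begin{bmatrix} 0_{k\times C} & I_k \end{bmatrix}$ with $e_i$ the $i$-th standard basis vector of $\mathbb{R}^d$. Then $B_*\Sigma_i B_*^T=\epsilon^2 I_k$, all quadratic forms diagonalize, and one obtains \emph{exactly} (not just to leading order)
\begin{equation*}
\mathcal{L}_G(V_{FT},B_{FT})=\frac{\epsilon^2\lambda^2}{2C}\Bigl[(a-1)^2+(C-1)\bigl(a^2+1\bigr)\Bigr],
\qquad
\mathcal{L}_G(V_{LPFT},B_{LPFT})=\frac{\epsilon^2\lambda^2}{2C}\,2(C-1),
\end{equation*}
where $a:=\eta\epsilon^2\bigl(1+\|V^*_{com}\|_2^2\bigr)$. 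The difference is $\frac{\epsilon^2\lambda^2}{2C}(a-1)(Ca+C-2)$, which is strictly negative for all $0<a<1$ and any $C\ge 2$: FT beats LP-FT for \emph{every} $\lambda>0$, so no positive threshold $\lambda^*$ exists. Hence the leading-coefficient inequality holds only under an extra condition of the form $\eta\epsilon^2(1+\|V^*_{com}\|_2^2)\ge 1$, i.e.\ a sufficiently large learning rate relative to the noise scale. Your fallback --- importing the baseline inequality ``and the learning-rate regime'' from Theorem~\ref{Thm: concept shift} --- does not rescue this, because that theorem states no learning-rate condition, and its proof contains the same kind of slip: for $i\neq 1$ it evaluates $\|\Delta_i\|_2^2$ as $\lambda^2 A_\alpha^2$, dropping the orthogonal component $-\lambda b^*_{m+i}$, whereas orthonormality gives $\lambda^2(A_\alpha^2+1)$; with the corrected value, the concept-shift comparison likewise requires $\eta(1+\|V^*_{com}\|_2^2)\ge 1$. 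In short: your algebra and your reduction are right (and expose a genuine flaw in the paper's argument), but the missing inequality is a real gap, and closing it honestly requires adding an explicit learning-rate/noise condition that neither your proposal nor the theorem statement currently contains.
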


\begin{remark}
In Theorem \ref{Theorem: Covariate and concept shift}, the parameter \(\lambda\) characterizes the level of heterogeneity among clients. The theorem shows that under both covariate and concept shifts, LP-FT outperforms FT in low heterogeneity settings (\(\lambda \leq \lambda^*\)), highlighting its advantage in maintaining generalization. To further reinforce the theoretical insights and cover more extensive settings, App.~\ref{sec: further empirical validation} provides extensive empirical validation, confirming the global superiority of LP-FT over FT under combined concept-covariate shifts. While the theoretical analysis in Theorem \ref{Theorem: Covariate and concept shift} focuses on the low heterogeneity regime, the experiments in App.~\ref{sec: further empirical validation} explore a broader range, including both high and low heterogeneity levels. Notably, LP-FT consistently outperforms FT across all heterogeneity regimes, aligning with our theoretical results in Sec.~\ref{sec:LPFT_Concept_Shift_Cov_shift}, particularly for deep neural networks in realistic PFT settings. These findings validate and extend our theoretical insights, demonstrating LP-FT's robustness and superiority in diverse distribution shift scenarios (see also App.~\ref{cov_concept_figures}).
\end{remark}

\section{Further Empirical Validations for Theoretical Findings}
\label{sec: further empirical validation}

\begin{table*}[htpb]
\centering
\small
\setlength{\tabcolsep}{8pt}
\renewcommand{\arraystretch}{1.2}

\caption{Performance under label-flipping for FT and LP-FT across different label-flipping ratios (LF.R.).}
\label{tab:label_flipping}

\begin{tabularx}{0.8\textwidth}{l *{4}{>{\centering\arraybackslash}X}}
\toprule
\textbf{Metric} & \textbf{LF.R. 20\%} & \textbf{LF.R. 30\%} & \textbf{LF.R. 40\%} & \textbf{LF.R. 50\%} \\ 
\midrule
\textbf{FT Avg. $\uparrow$}       & 67.73 & 60.04 & 58.27 & 60.06 \\
\textbf{LP-FT Avg. $\uparrow$}    & 79.83 & 72.95 & 71.55 & 73.26 \\
\midrule
\textbf{FT Global $\uparrow$}     & 68.76 & 55.18 & 53.70 & 56.17 \\
\textbf{LP-FT Global $\uparrow$}  & 83.08 & 72.75 & 69.89 & 72.32 \\
\midrule
\textbf{FT Local $\uparrow$}      & 91.32 & 91.12 & 90.84 & 91.88 \\
\textbf{LP-FT Local $\uparrow$}   & 91.23 & 89.20 & 90.02 & 90.87 \\
\bottomrule
\end{tabularx}
\end{table*}

Despite being based on simplified data and model assumptions, our theoretical results demonstrate significant practical relevance. In this section, we empirically validate the contributions in Sec. \ref{sec:theory}, exploring the performance implications of controllable heterogeneities in neural networks and datasets.

\noindent\textbf{Experimental Settings.} To validate the impact of $\lambda$ in Theorem~\ref{Theorem: Covariate and concept shift}, we simulate a controllable concept shift setting on the \texttt{Digit5} dataset with label-flipping under PFT for both FT and LP-FT. For each client, a proportion of labels is randomly flipped, referred to as the flipping ratio. For example, class one is flipped with class two for the first client, and class two with class three for the second, using a randomized mechanism. A higher flipping ratio indicates greater heterogeneity $\lambda$. The settings align with prior studies: the model is pre-trained within the FL framework and used to initialize both FT and LP-FT. This simulates the combined concept-covariate shift discussed in Sec.~\ref{sec:LPFT_Concept_Shift_Cov_shift}. Flipping labels reflects different labeling functions, where higher flipping rates indicate stronger concept shifts. The \texttt{Digit5} dataset also introduces covariate shift, as outlined in Sec.~\ref{exp:setting}.

\noindent\textbf{Results.} As shown in Tab.~\ref{tab:label_flipping}, LP-FT consistently outperforms FT in global performance across various flipping ratios. This aligns with our theoretical results in Sec.~\ref{sec:LPFT_Concept_Shift_Cov_shift}, especially for deep neural networks under realistic PFT settings. The flipping rate controls concept shift heterogeneity, with higher rates indicating greater heterogeneity, while varying data distributions introduce covariate shift. These experiments simulate the combined concept-covariate shift, as analyzed in our framework. Notably, LP-FT outperforms FT in all heterogeneity levels, validating its advantage in both low and high heterogeneity regimes (larger flipping ratios).

\section{Conclusion}
\label{sec:conclusion}
In this work, we studied an important PFL paradigm -- PFT and tackled its key challenge of balancing local personalization and global generalization. We establish LP-FT as a theoretically grounded and empirically robust solution for PFT. Our work demonstrates that LP-FT effectively mitigates federated feature distortion, balancing client-specific adaptation with global generalization under extreme data heterogeneity. Methodologically, we are the first to adapt LP-FT to post-hoc PFT; empirically, we validate LP-FT’s superiority across seven datasets; theoretically, we formalize its advantages in FL’s unique covariate-concept shift regime. This work advances lightweight, deployable personalization for real-world FL systems.

\renewcommand{\bibname}{References}
\bibliography{main}
\bibliographystyle{tmlr}

\newpage
\appendix
\begingroup
\tableofcontents
% \addtocontents{toc}{\protect\setcounter{tocdepth}{0}}

\endgroup
\newpage

% ----------------- Roadmap paragraph -----------------
\noindent\textbf{Roadmap of Appendix.}  
This appendix provides all supplemental material referred to in the main paper.  First, in Section \ref{app:rel_work} we review related work on personalized federated learning (FL) and fine-tuning.  Section \ref{app:ft} introduces the variants of fine-tuning strategies evaluated in our experiments, including Proximal FT, Soup FT, Sparse FT, LSS FT, and LP-FT.  Section \ref{app:exp_detail} gives full experimental details: computing environment, hyperparameters, dataset visualizations (Fig.~\ref{fig:visualization}), and dataset/model statistics (Table \ref{tab:datasets}).  In Section \ref{sec:app_main_results} we present additional empirical results, including PEFT distortions (Table \ref{tab:app_peft}) and label-shift experiments (Table \ref{tab:cifar10}).  Section \ref{sec:app_theory} contains full proofs of Lemma \ref{lem:gradient_ft}, Theorems \ref{Thm: concept shift} and \ref{Theorem: Covariate and concept shift}, along with illustrative plots (Fig.~\ref{fig:theorem_combined}).  Finally, Section \ref{sec:app_overhead} analyzes the computational overhead of LP-FT vs.\ FT.

% ----------------- Notation table -----------------
\begin{table}[ht]
  \centering
  \caption{Important notations used in the appendix}
  \label{tab:notation}
  \begin{tabular}{@{}ll@{}}
    \toprule
    \textbf{Notation} & \textbf{Description} \\
    \midrule
    $C$ & Number of clients in federated learning \\
    $m$ & Dimension of the common part of the linear head shared across clients \\
    $d$ & Dimension of feature representations \\
    $k$ & Dimension of the linear head \\
    $B \in \mathbb{R}^{k \times d}$ & Feature extractor matrix  \\
    $B_* \in \mathbb{R}^{k \times d}$ & Ground-truth feature extractor matrix (with orthonormal rows)\\
    $V \in \mathbb{R}^k$ & Linear head weight vector \\
    $V_i^* \in \mathbb{R}^k$ & Ground-truth linear head for client $i$ \\
    $V_0 \in \mathbb{R}^k$ & Initial linear head before fine-tuning \\
    $V^*_{\mathrm{com}} \in \mathbb{R}^m$ & Common component of linear heads across clients \\
    $\lambda$ & Concept shift magnitude (heterogeneity parameter) \\
    $\epsilon$ & Noise scale in the covariate shift model \\
    $\mathcal{D}_i$ & Data distribution of client $i$ \\
    $\mathcal{D}_G$ & Mixture of all clients' data distributions (global distribution) \\
    $\eta$ & Learning rate used during fine-tuning \\
    $\theta_L$ & Parameters of the local model \\
    $\theta_G$ & Parameters of the global model \\
    $\mathcal{L}_L$ & Local loss function \\
    $\widehat{\mathcal{L}}_L$ & Empirical local loss \\
    $\mathcal{L}_G$ & Global loss function \\
    \bottomrule
  \end{tabular}
\end{table}

\section{Related Work: Personalized Federated Learning and Fine-Tuning}
\label{app:rel_work}

\noindent\textbf{Personalized FL.}
Heterogeneous FL refers to a decentralized training paradigm that accommodates diverse and disparate data sources or devices participating in a collaborative model-building process. 
Examples include FedAvg~\citep{DBLP:conf/aistats/McMahanMRHA17} and various improvements in terms of {aggregation optimization} and {local optimization}. 
It is shown to experience challenges in heterogeneous scenarios. Thus, various literature proposes alternative strategies. Here, we summarize these strategies into {aggregation optimization} and {local optimization}. FedNova~\citep{DBLP:conf/nips/WangLLJP20} belongs to the category of {Aggregation optimization}, which normalizes and scales the local updates. Examples of {local optimization} include FedProx~\citep{DBLP:conf/mlsys/LiSZSTS20} and Scaffold~\citep{DBLP:conf/icml/KarimireddyKMRS20}, where FedProx adds a $L_2$ regularization for each client and Scaffold adds a variance reduction term.
However, these methods often exhibit limited personalization capabilities and may not adequately meet the performance requirements of different clients. 
Consequently, various personalized FL approaches have been proposed, with a primary emphasis on enhancing local client performance to the greatest extent possible. We can group these personalized FL strategies into {clustering}-based methods ~\citep{DBLP:conf/nips/GhoshCYR20}, transfer learning~\citep{DBLP:journals/corr/abs-2002-04758}, and interpolating the local and global models~\citep{DBLP:journals/corr/abs-2002-10619, DBLP:journals/corr/abs-2102-12660}.
Some FL methods~\citep{DBLP:conf/icml/GuoTL24, DBLP:conf/cvpr/SonKCHL24} highlight that existing federated learning methods often fail under feature shift despite addressing label shift, proposing clustering and regularization strategies respectively to tackle diverse distribution shifts in non-IID data settings.

\noindent\textbf{Fine-Tuning.}
Fine-tuning pre-trained models has become increasingly popular with the rise of foundation models~\citep{DBLP:journals/corr/abs-2108-07258}. However, fine-tuning with limited data often lead to overfitting. Several strategies can mitigate this issue, such as using optimizers that promote a flatter loss landscape~\citep{DBLP:conf/nips/Li0TSG18, DBLP:conf/nips/KaddourLSK22}. Notably, Sharpness-Aware Minimization (SAM)~\citep{DBLP:conf/iclr/ForetKMN21} and Stochastic Weight Averaging (SWA)~\citep{DBLP:conf/uai/IzmailovPGVW18} are two popular methods that help achieve this. Additionally, a technique called \textit{model soups}~\citep{DBLP:conf/icml/WortsmanIGRLMNF22}, uses a simple greedy weight averaging approach similar to SWA, shown significant improvements in fine-tuning. 
An interesting perspective focuses on minimizing the linear mode connectivity barrier between the pre-trained and fine-tuned models, helping maintain consistency in decision-making mechanisms from a loss landscape perspective~\citep{DBLP:conf/icml/VlaarF22}. \textit{Partial fine-tuning} is another common method to prevent overfitting, which involves selectively fine-tuning specific layers of the model to better adapt to variations in data distribution~\citep{DBLP:conf/iclr/0001CTKYLF23}. Recent studies have introduced the concept of LP-FT~\citep{DBLP:conf/iclr/KumarRJ0L22}, highlighting potential distortions in pre-trained features and their underperformance in scenarios involving previously unseen data. Further research on LP-FT provides a deeper analysis of model adaptation~\citep{DBLP:conf/iclr/TrivediKT23}, focusing on feature distortion and simplicity bias, thereby enhancing our understanding of fine-tuning mechanisms and safe model adaptation.

\section{Different Variants of Fine-tune Strategy}
\label{app:ft}

\begin{figure*}[htbp]
    \centering
    \includegraphics[width=0.9\linewidth]{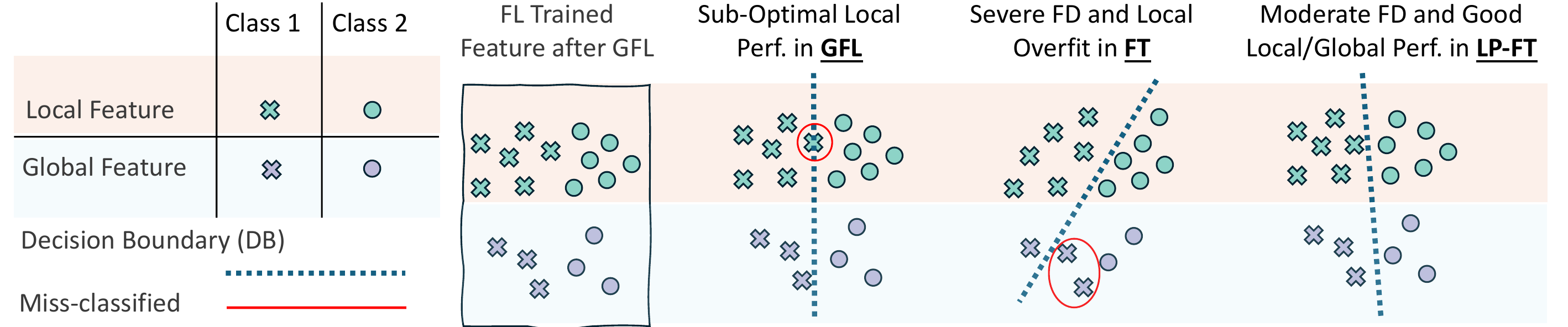}
    \caption{Illustration of federated feature distortion (FD) and decision boundaries.}
    \label{fig:feature_dist}
% \end{wraptable}
% \vspace{-3mm}
\end{figure*}

This section provides an overview of the baseline techniques utilized in our study. We describe the characteristics and implementation specifics of three main fine-tuning methods: Proximal FT, Soup FT, and Sparse FT.

\subsection{Proximal FT}
\label{sec:app_theory}
Proximal Fine-Tuning (Proximal FT)~\citep{li2020federated} is a method that emphasizes preserving the original knowledge of the pre-trained model while adapting it to new tasks. This technique employs proximal regularization, which penalizes large deviations from the initial model parameters during the fine-tuning process. The primary advantage of Proximal FT is its ability to maintain the generalization capabilities of the pre-trained model, thus reducing the risk of overfitting to the new task's data. In our experiments, we used an L2 regularization term to enforce proximity between the pre-trained and fine-tuned weights, with a regularization coefficient of 0.01.

\subsection{Soup FT}
Soup Fine-Tuning (Soup FT)~\citep{wortsman2022model} is an innovative approach that leverages the concept of "model soups," where multiple fine-tuned models are combined to create a more robust final model. The key idea is to fine-tune several instances of the pre-trained model on the target task with different random initializations or data shuffling, and then average the resulting weights to form a "soup." This method aims to enhance model robustness and performance by integrating the strengths of various fine-tuning instances. For our implementation, we fine-tuned five versions of the pre-trained model and averaged their parameters to create the final Soup FT model.

\subsection{Sparse FT}
Sparse Fine-Tuning (Sparse FT)~\citep{lee2018snip} introduces sparsity constraints into the fine-tuning process, encouraging the model to update only a subset of its parameters. This approach aims to improve model efficiency and interpretability by ensuring that only the most relevant weights are adjusted during training. Sparse FT can be particularly beneficial for deploying models in resource-constrained environments where computational efficiency is paramount. In our experiments, we applied a gradient-based metric to the parameter prune, setting the regularization coefficient to 0.001 to achieve a balance between performance and sparsity.

\subsection{LSS FT}
In the context of FL, the Local Superior Soups (LSS) methodology~\citep{DBLP:journals/corr/abs-2410-23660} introduces a novel approach to enhance both generalization and communication efficiency. By leveraging model interpolation-based local training, LSS encourages clients to explore a connected low-loss basin through optimizable and regularized model interpolation. This strategy not only mitigates the challenges posed by data heterogeneity but also significantly reduces the number of communication rounds required for model convergence. Empirical evaluations have demonstrated the effectiveness of LSS across various widely used FL datasets, underscoring its potential as a catalyst for the seamless adaptation of pre-trained models in federated settings. In our experiments, we fine-tuned three candidate models for model interpolation and averaged their parameters to create the final Soup FT model.

\subsection{LP-FT}
Linear Probing and then Fine-Tuning (LP-FT)~\citep{DBLP:conf/iclr/KumarRJ0L22} is a two-step transfer learning approach designed to balance in-distribution (ID) and out-of-distribution (OOD) performance. In the first step, linear probing trains only the final layer (head) while freezing the pretrained feature extractor to ensure OOD robustness. The second step fine-tunes all model parameters to improve ID accuracy while retaining the benefits of linear probing for OOD generalization. LP-FT addresses the trade-offs inherent in full fine-tuning by initializing with a well-aligned linear head, reducing feature distortion during optimization. Empirically, LP-FT demonstrates superior ID and OOD accuracy across diverse datasets. 

\section{Experimental Details}
\label{app:exp_detail}

\subsection{Computing Environment and Hyper-parameters}
All experiments in this paper are conducted on NVIDIA A40 Graphics cards using PyTorch. The Adam optimizer is employed with a learning rate of $1 \times 10^{-3}$. In FL for all datasets, the standard local model update epochs are set to 1. The communication round is set to be 100 epochs, where we validated the model results from FL converged. Unless specified otherwise, the batch size for all benchmarks is standardized at 128. To ensure a fair comparison with various baselines, all methods initiate the FL personalized fine-tuning with models derived from the best-performing global model in terms of overall effectiveness.

\subsection{Visualization of Original Images}
\noindent\textbf{Datasets and Distribution Shifts.}  
Figure \ref{fig:visualization} gives an overview of the benchmarks used in this study, organized by the four types of distribution shift we investigate—\emph{feature-, input-, output-,} and \emph{label-level}.  All datasets are chosen to mirror the heterogeneity observed in real-world federated learning (FL) deployments.

\begin{itemize}[leftmargin=*,topsep=2pt,itemsep=4pt]

\item \textbf{Feature-level shift: Digit5 and DomainNet.}  \textbf{Digit5} merges five digit domains—MNIST, SVHN, USPS, SynthDigits, and MNIST-M—whose diverse styles (e.g.\ grayscale scans vs.\ synthetic colors) induce substantial feature variations. \textbf{DomainNet} depicts everyday objects across six artistic domains (clip art, sketch, photo, \textit{etc.}), further stressing the model’s ability to transfer features across drastically different visual styles.
    
    \item \textbf{Input-level shift: CIFAR10-C and CIFAR100-C.}  
    CIFAR10-C/100-C apply 50 corruption types (Gaussian noise, motion blur, pixelation, brightness, contrast, …) to the original CIFAR images, emulating degradations caused by hardware or environment.  These pixel-level perturbations leave semantics intact while challenging a model’s robustness to distorted inputs—crucial for FL settings such as mobile sensing and autonomous driving.
    
    \item \textbf{Output-level shift: CheXpert and CelebA.}  
    In \textbf{CheXpert}, we focus on two labels (Edema, No Finding) and partition clients to introduce demographic biases (e.g.\ male patients predominating in Edema, female in No Finding).  
    \textbf{CelebA} clients are built by correlating gender and hair-color attributes (e.g.\ “blonde-haired females” vs.\ “non-blonde males”), yielding skewed label distributions that test a model’s resilience to demographic imbalance.
    
    \item \textbf{Label-level shift: CIFAR10 (Dirichlet-$\alpha\!=\!0.1$).}  
    We partition the clean CIFAR-10 dataset among 20 clients using a Dirichlet prior with $\alpha=0.1$, producing highly imbalanced class proportions.  Such label shift typifies FL scenarios where each client serves a niche population—e.g.\ hospitals specializing in different diseases—violating the IID assumption of standard aggregation rules.
\end{itemize}

\begin{figure}[htpb]
	\centering
	\subfloat[Feature-level Shift (\texttt{Digit5}and \texttt{DomainNet})]{
		\includegraphics[width=0.5\linewidth]{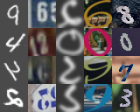} 
  \includegraphics[width=0.5\linewidth]{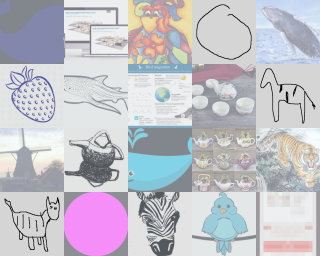} 
  }
  \\
	\subfloat[Input-level Shift (\texttt{CIFAR10-C} and \texttt{CIFAR100-C})]{
 \includegraphics[width=0.5\linewidth]{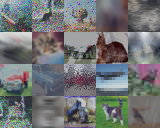}
 \includegraphics[width=0.5\linewidth]{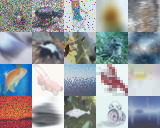}
		}

  \subfloat[Output-level Shift (\texttt{ChexPert} and \texttt{CelebA})]{
		\includegraphics[width=0.5\linewidth]{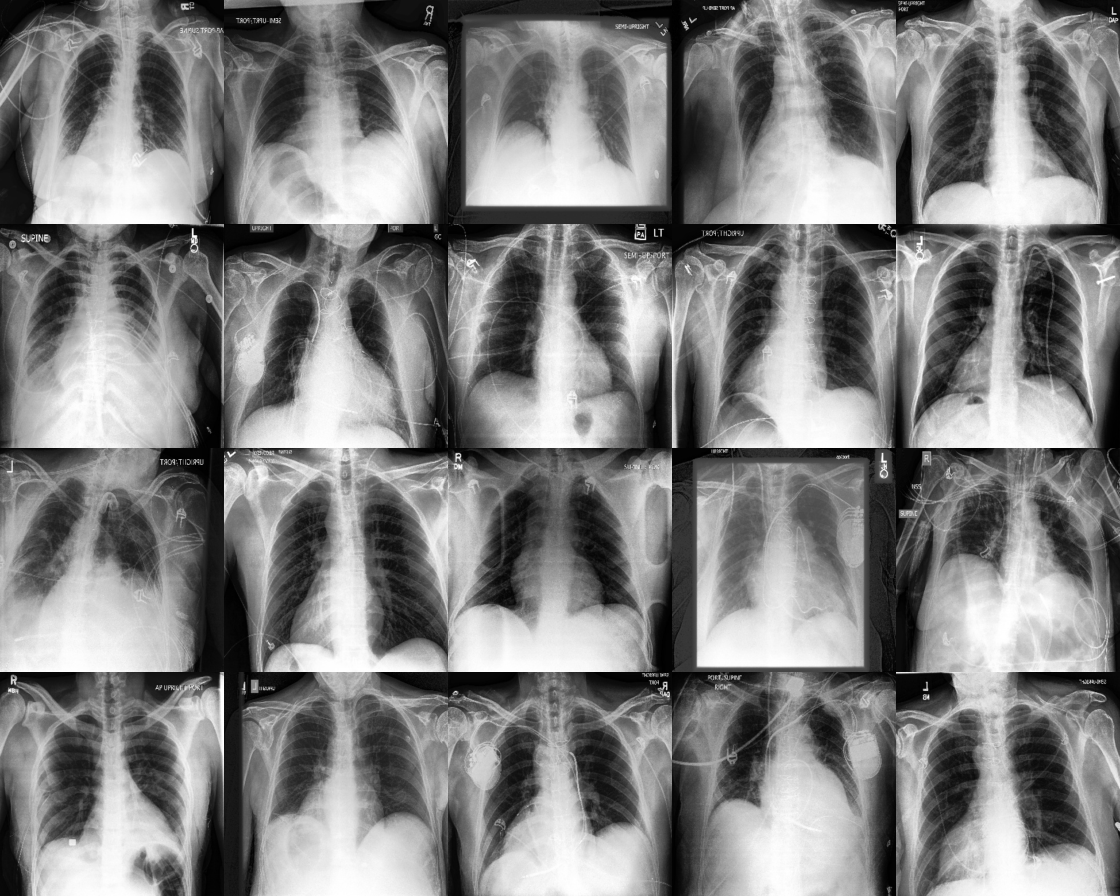}
  \includegraphics[width=0.5\linewidth]{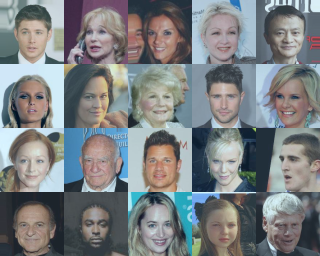}
  }
  \\
	\caption{Visualization of the original datasets used in the paper.}
		\label{fig:visualization}
\end{figure}

% \begin{figure*}[H]
%     \centering
%     % Feature-level Shift
%     \begin{subfigure}{0.45\linewidth}
%         \centering
%         \includegraphics[width=\linewidth]{figure/image/Digit5/digit5.png}
%         \includegraphics[width=\linewidth]{figure/image/DomainNet/domainnet.png}
%         \caption{Feature-level Shift (\texttt{Digit5} and \texttt{DomainNet})}
%     \end{subfigure}
%     \\
%     % Input-level Shift
%     \begin{subfigure}{0.45\linewidth}
%         \centering
%         \includegraphics[width=\linewidth]{figure/image/CIFAR10-C/cifar10c.png}
%         \includegraphics[width=\linewidth]{figure/image/CIFAR100-C/cifar100c.png}
%         \caption{Input-level Shift (\texttt{CIFAR10-C} and \texttt{CIFAR100-C})}
%     \end{subfigure}
%     \\
%     % Output-level Shift
%     \begin{subfigure}{0.45\linewidth}
%         \centering
%         \includegraphics[width=\linewidth]{figure/image/Chexpert/chexpert.png}
%         \includegraphics[width=\linewidth]{figure/image/CelebA/celeba.png}
%         \caption{Output-level Shift (\texttt{ChexPert} and \texttt{CelebA})}
%     \end{subfigure}
%     \\
%     \caption{Visualization of the original datasets used in the paper.}
%     \label{fig:visualization}
% \end{figure*}

\subsection{Detailed Dataset and Model Information}
\label{app:data_model}
% Tab.~\ref{tab:datasets} provides detailed information about datasets and their configurations used in the study, categorized by types of data heterogeneity: feature-level shift, input-level shift, output-level shift, and label shift. For feature-level shift, datasets like Digit5 and DomainNet include digit and multi-domain images processed for domain adaptation tasks. Input-level shift, represented by CIFAR10-C and CIFAR100-C, evaluates model robustness under corruption scenarios with various distortions. Output-level shift includes CheXpert, a chest radiograph dataset, and CelebA, a large-scale facial attribute dataset, both addressing spurious correlations. Lastly, label shift is assessed using CIFAR10 with a Dirichlet distribution to simulate imbalanced label distributions across clients. Each dataset specifies the model used, client distribution, and number of classes.

\textbf{Table.~\ref{tab:datasets}} provides a visual overview of the datasets used in this study, categorized by their levels of transformation and data heterogeneity. The table is divided into four sections, corresponding to feature-level, input-level, output-level, and label shift settings:

\textit{Feature-Level Shift (Digit5 and DomainNet):} The \texttt{Digit5} dataset, which consists of digit images collected from five distinct domains, including MNIST, SVHN, USPS, SynthDigits, and MNIST-M. These images exhibit a variety of styles, such as handwritten digits, digits rendered in different fonts, and textured representations, demonstrating substantial visual heterogeneity. The \texttt{DomainNet} dataset is a large-scale collection featuring objects and scenes from six domains, including styles like clip art, sketches, and realistic photographs. 

\textit{Input-Level Shift (\texttt{CIFAR10-C} and \texttt{CIFAR100-C}):} This type of distribution shift includes corrupted versions of CIFAR-10 and CIFAR-100 datasets. The \textbf{CIFAR10-C} dataset applies 50 types of corruptions, such as noise, blur, and distortions, to evaluate model robustness under various degradation conditions. Similarly, \textbf{CIFAR100-C} extends the CIFAR-100 dataset by introducing the same set of corruptions, enabling robustness evaluation on a larger and more diverse set of categories.

\begin{table}[t]
\centering
\adjustbox{max width=\textwidth}{%
\begin{tabular}{@{}L{3cm}C{2cm}C{2cm}L{8cm}C{1.5cm}C{1.5cm}@{}}
\toprule
\textbf{Data Heterogeneity Type} & \textbf{Dataset} & \textbf{Model} & \textbf{Description} & \textbf{Clients} & \textbf{Classes} \\ \midrule

\multirow{8}{*}{\textbf{Feature-Level Shift}} 
& \texttt{Digit5} & ResNet18 & A collection of digit images from five domains, used for domain adaptation and digit recognition tasks~\citep{DBLP:journals/corr/abs-2303-02278}. Datasets include MNIST, SVHN, USPS, SynthDigits, and MNIST-M. & 5 & 10 \\ \cmidrule(l){2-6}
& \texttt{DomainNet} & ResNet50 & A large-scale dataset of images from six distinct domains for multi-source domain adaptation~\citep{peng2019moment}. Preprocessing follows the strategy in FedBN~\citep{li2021fedbn}. & 6 & 10 \\ \midrule

\multirow{10}{*}{\textbf{Input-Level Shift}} 
& \texttt{CIFAR10-C} & ResNet18 & A corruption benchmark dataset for CIFAR-10~\citep{DBLP:conf/iclr/HendrycksD19}, augmented with 30 additional corruption types~\citep{mintun2021interaction} and one extra type~\citep{chen2021benchmarks}, totaling 50 corruption types (including the original 19 types of corruption from CIFAR-10-C, plus 30 additional corruption types and one extra type, resulting in a total of 50 distinct corruption types). & 50 & 10 \\ \cmidrule(l){2-6}
& \texttt{CIFAR100-C} & ResNet18 & An extension of CIFAR-100 with common corruptions, following the same strategy as CIFAR-10-C. & 50 & 100 \\ \midrule

\multirow{10}{*}{\textbf{Output-Level Shift}} 
& \texttt{CheXpert} & ResNet50 & A chest radiograph dataset labeled for 14 common chest conditions~\citep{irvin2019chexpert}. Edema and No Finding labels are grouped as described in~\citep{jin2024debiased}. Clients are spuriously correlated with the attribute gender (e.g., 90\% of label 1 examples in a client are male). & 2 & 2 \\ \cmidrule(l){2-6}
& \texttt{CelebA} & ResNet50 & Over 200,000 celebrity images with 40 attributes~\citep{liu2015faceattributes}. Client splitting follows the same strategy as CheXpert, with attributes: male, female, blonde hair, and non-blonde hair. & 4 & 2 \\ \midrule

\multirow{3}{*}{\textbf{Label Shift}} & \texttt{CIFAR10} & ResNet18 & A benchmark dataset with label shift induced via Dirichlet distribution ($\alpha=0.1$), distributed across 20 clients~\citep{krizhevsky2009learning}. & 20 & 10 \\ 

\bottomrule
\end{tabular}%
}
\caption{Detailed information about the datasets used in the study. For all the settings above, each client has an individual data distribution, while different clients prohibits the corresponding shifts to ensure the non-IID nature required for heterogeneous FL. Feature-level shift, also referred to as subgroup shift, and input-level shift, corresponding to image corruption, are categorized as covariate shift. Output-level shift, representing spurious correlations in our setting, is categorized as concept shift.}
\label{tab:datasets}
\end{table}

\textit{Output-Level Shift (CheXpert and CelebA):} The \textbf{CheXpert} dataset, a widely used medical imaging dataset labeled for 14 common chest conditions. In this study, the Edema and No Finding labels are grouped, and spurious correlations are introduced at the client level, where attributes (i.e. gender in our client splitting) are disproportionately represented (i.e., 90\% of label 1 examples in a client are a certain attribute). The \textbf{CelebA} dataset, which includes over 200,000 celebrity faces annotated with 40 attributes. Client splitting follows the same strategy as CheXpert, with attributes such as male, female, blonde hair, and non-blonde hair used to create spurious correlations across clients. 

\textit{Label Shift (CIFAR10):} The final one focuses on label shift in the \textbf{CIFAR10} dataset. This setting simulates non-IID conditions by inducing label distributions across clients using a Dirichlet distribution with $\alpha=0.1$. This creates significant variations in class distributions among the 20 clients, mimicking real-world federated learning scenarios where data availability across clients is inherently imbalanced.

\section{Further Empirical Results}
\label{sec:app_main_results}

\subsection{Other PEFT Methods Distort Federated Feature}
\label{subsec:PEFT}

\begin{table}[htpb]
  \centering
  \begin{tabular}{>{\bfseries}lccc}
    \toprule
    PEFT Method & Local & Global & Avg. \\
    \midrule
    LoRA (lr=1e-3) & 41.54 & 26.75 & 26.87 \\
    LoRA (lr=1e-4) & 42.01 & 26.41 & 27.18 \\
    Adapter (lr=1e-3) & 48.35 & 39.28 & 38.08 \\
    Adapter (lr=1e-4) & 49.07 & 39.83 & 38.36 \\
    LP-FT & 68.50 & 57.52 & 53.52 \\
    % BT-FT & 70.01 & 54.45 & 51.03 \\
    \bottomrule
  \end{tabular}
  \caption{Different PEFT compared on \texttt{DomainNet} with ViT.}
  \label{tab:app_peft}
\end{table}

% \begin{table}[b]
%   \centering
%   \begin{tabular}{>{\bfseries}lccc}
%     \toprule
%     PEFT Method & Local & Global & Avg. \\
%     \midrule
%     LoRA (lr=1e-3) & 41.54 & 26.75 & 26.87 \\
%     LoRA (lr=1e-4) & 42.01 & 26.41 & 27.18 \\
%     Adapter (lr=1e-3) & 48.35 & 39.28 & 38.08 \\
%     Adapter (lr=1e-4) & 49.07 & 39.83 & 38.36 \\
%     % BT-FT & 70.01 & 54.45 & 51.03 \\
%     \bottomrule
%   \end{tabular}
%   \caption{Different PEFT compared on DomainNet with ViT.}
%   \label{tab:peft}
% \end{table}

\paragraph{Setup.}
In this study, we adapted two widely recognized Parameter-Efficient Fine-Tuning (PEFT) methods: LoRA~\citep{DBLP:conf/iclr/HuSWALWWC22} and Adapter~\citep{DBLP:conf/icml/HoulsbyGJMLGAG19}. Both methods were fine-tuned with meticulous adjustments to their learning rates on the ViT, as detailed in Tab.~\ref{tab:app_peft}. These configurations allowed us to assess the impact of different fine-tuning strategies on federated features. The performance of each method was measured in terms of local, global, and average accuracy.

\paragraph{Result.}

The effectiveness of bias tuning in personalized fine-tuning naturally raises the question of whether other PEFT methods, commonly used for fine-tuning large models, exhibit similar effectiveness in our setting. In this study, we compare the local and global performance of other popular PEFT methods. 
Our findings reveal that while these methods can achieve high levels of local performance, their global performance still drops significantly, indicating that they distort the federated features to a certain extent.
These PEFT methods' local and global performance still fall short compared to LP-FT, indicating that they distort the federated features to a certain extent.

The comparison of PEFT methods in the context of personalized fine-tuning sheds light on the unique challenges and requirements of this setting. Despite the success of PEFT techniques in fine-tuning large models for various tasks, our results suggest that their direct application to personalized FL may not yield optimal results in terms of preserving the global knowledge captured by the federated features.

\section{Proofs}
\begin{proof}[Proof of Lemma \ref{lem:gradient_ft}]
We want to analyze the Fine-Tuning (FT) method, focusing on the effect of initial parameters. We perform one pass through the entire dataset to simulate the complete fine-tuning process. Consider the Mean Squared Error (MSE) loss function with parameters \( V \) and \( B \), where \( B \) is represented as follows:
\[
B = \begin{bmatrix}
 b_1^T \\ 
 \vdots \\ 
 b_m^T \\ 
 b_{m+1}^T \\
 \vdots \\ 
 b_{m+C}^T 
\end{bmatrix},
\]
where \( b_i^T \in \mathbb{R}^{1 \times d} \) denotes the \( i \)-th row of matrix \( B \), and \( m + C = k \).

To apply one step of gradient descent, we need to compute the gradient of the loss function with respect to \( V \), \( b_1 \), \( b_2 \), \ldots, \( b_{m+C} \), and then perform one update step.

% Assume we have $n$ data points coming from the distribution of the data of the
W.L.O.G. we assume the local client is client 1. We define the local loss function as follows:
\[
\mathcal{L}_L(V,B) = \mathbb{E}_{x \sim \mathcal{D}_1} \left[ \frac{1}{2} (V^T B x - {V_1^*}^T B_* x) ^ 2 \right],
\]
where \(\mathcal{D}_1\) is the data distribution for client 1.

Now, let \( ( \mathbf{X}_1, \mathbf{Y}_1) \) represent the local dataset of client 1, consisting of \( n_1 \) data points \(\{(x_{1j},y_{1j})\}_{j=1}^{n_1}\). We aim to calculate the gradient of the empirical loss function with respect to the parameters. The empirical loss function is given by:
\[
\widehat{\mathcal{L}_L}(V,B) = \frac{1}{n_1} \sum\limits_{j=1}^{n_1}\left[ \frac{1}{2} (V^T B x_{1j} - {V_1^*}^T B_* x_{1j}) ^ 2 \right].
\]
In practice, we take the gradient of this empirical loss function with respect to the parameters \( V \), \( b_1 \), \( b_2 \), \ldots, \( b_{m+C} \). However, since we are particularly interested in computing the expectation \(\mathbb{E}[b_j^{FT}]\), we evaluate the expected value of the gradients using one pass through the whole dataset as follows:
\begin{align*}
  \mathbb{E}\left[ \left.\frac{\partial \widehat{\mathcal{L}_L}}{\partial V}\right|_{\substack{V=V_0 \\ B= B_*}} \right] & = \mathbb{E}\left[ \left.\frac{\partial}{\partial V} \left( \frac{1}{n_1} \sum_{j=1}^{n_1} \frac{1}{2} \left(V^T B x_{1j} - {V_1^*}^T B_* x_{1j}\right)^2 \right) \right|_{\substack{V=V_0 \\ B= B_*}} \right] \\
  & = \frac{1}{n_1} \sum\limits_{j=1}^{n_1} \mathbb{E} \left[   (V_0^T B_* x_{1j} - {V_1^*}^T B_* x_{1j}) x_{1j}^T B_*^T\right] \\
  & = \mathbb{E}_{x\sim \mathcal{D}_1} \left[   (V_0^T B_* x - {V_1^*}^T B_* x) x^T B_*^T\right].
\end{align*}
This follows because the dataset is drawn i.i.d.\ from the population, making the empirical gradient an unbiased estimate of the true gradient.
Therefore, let $V_0 = \begin{bmatrix} {V_{com}^*}^T & \mathbf{0}^T \end{bmatrix}^T$. It follows that:
\begin{align*}
    \mathbb{E}\Biggl[ \left.\frac{\partial L}{\partial V}\right|_{\substack{V=V_0 \\ B= B_*}} \Biggr] & =  \mathbb{E}_{x\sim \mathcal{D}_1} \left[   (V_0^T B_* x - {V_1^*}^T B_* x) x^T B_*^T\right] \\
    & = \mathbb{E}_{x\sim \mathcal{D}_1} \Bigl[   \bigl((V_0-V_1^*)^T B_* x \bigr) x^T B_*^T\Bigr] \\
    & =    (V_0-V_1^*)^T B_*  \Bigl(\mathbb{E}_{x\sim \mathcal{D}_1} \Bigl[x x^T\Bigr] \Bigr) B_*^T \\
    & =    (V_0-V_1^*)^T B_*  B_*^T & \text{(second moment is identity)}\\
    & =    (V_0-V_1^*)^T.  & \text{($B_*$ has orthonormal rows)}
\end{align*}
Let $
B_* = \begin{bmatrix}
 {b_1^*}^T \\ 
 \vdots \\ 
 {b_m^*}^T \\ 
 {b_{m+1}^*}^T \\ 
 \vdots \\ 
 {b_{m+C}^*}^T 
\end{bmatrix}$. Then, similarly, it can be shown that:
\begin{align*}
    \mathbb{E} \Biggl[ \left.\frac{\partial L}{\partial b_j}\right|_{\substack{V= V_0 \\ B= B_*}} \Biggr] & = \mathbb{E}_{x\sim \mathcal{D}_1} \left[   (V_0^T B_* x - {V_1^*}^T B_* x) {(V_0)}_j x^T\right] \\
    & = \mathbb{E}_{x\sim \mathcal{D}_1} \left[ {(V_0)}_j  \Bigl((V_0 -{V_1^*})^T B_* x\Bigr)  x^T\right] \\
    & = {(V_0)}_j  (V_0 -{V_1^*})^T B_*  \Bigl( \mathbb{E}_{x\sim \mathcal{D}_1} \left[ x  x^T\right] \Bigr) \\
    & = {(V_0)}_j  (V_0 -{V_1^*})^T B_* . & \text{(second moment is identity)}
\end{align*}
 Here,  ${(V_0)}_j$ is the $j$-th element of the vector $V_0$. For learning rate $\eta$, one step of gradient descent is:
\begin{align*}
    V_{FT} & = V_0 - \eta \biggl( \left.\frac{\partial L}{\partial V}\right|_{\substack{V=V_0 \\ B= B_*}} \biggr)^T\\
    b_j^{FT} & = b_j^* -  \eta \biggl( \left.\frac{\partial L}{\partial b_j}\right|_{\substack{V= V_0 \\ B_0 = B_*}}\biggr)^T.
\end{align*}
These two equations can be further refined as:
\begin{align*}
    \mathbb{E} \bigl[ V_{FT} \bigr]  & = \begin{bmatrix} {V_{com}^*}^T & \mathbf{0}^T \end{bmatrix}^T - \eta (V_0-V_1^*) = \begin{bmatrix} {V_{com}^*}^T & \eta \lambda e_1^T \end{bmatrix}^T\\
    \mathbb{E}\bigl[b_j^{FT} \bigr] & = \mathbb{E}\bigg[ b_j^* -  \eta \biggl( \left.\frac{\partial L}{\partial b_j}\right|_{\substack{V= V_0\\ B_0 = B_*}}\biggr)^T \bigg] = b_j^* -  \eta \biggl( {(V_0)}_j  (V_0 -{V_1^*})^T B_*\biggr)^T \\
    & = b_j^* -  \eta \lambda \biggl( {(V_0)}_j  \begin{bmatrix} \mathbf{0}^T & -e_1^T \end{bmatrix} B_*\biggr)^T = b_j^* +  \eta \lambda {(V_0)}_j b_{m+1}^*.
\end{align*}
\allowdisplaybreaks
Therefore, we have:
\begin{align*}
    \mathbb{E} \Bigl[ B_{FT} \Bigr] & = \begin{bmatrix}
        {b_1^*}^T + \eta \lambda{(V_0)}_1 {b_{m+1}^*}^T \\
        \vdots \\
        {b_m^*}^T + \eta \lambda{(V_0)}_m {b_{m+1}^*}^T \\
        {b_{m+1}^*}^T + \eta \lambda{(V_0)}_{m+1} {b_{m+1}^*}^T \\
        \vdots \\
        {b_{m+C}^*}^T + \eta \lambda{(V_0)}_{m+C} {b_{m+1}^*}^T
    \end{bmatrix}
    \\ &  = \begin{bmatrix}
        {b_1^*}^T + \eta \lambda{(V_0)}_1 {b_{m+1}^*}^T \\
        \vdots \\
        {b_m^*}^T + \eta \lambda{(V_0)}_m {b_{m+1}^*}^T \\
        {b_{m+1}^*}^T  \\
        \vdots \\
        {b_{m+C}^*}^T 
    \end{bmatrix}  = \begin{bmatrix}
    {b_1^*}^T + \eta \lambda (V_{com}^*)_1 {b_{m+1}^*}^T \\
    \vdots \\
    {b_m^*}^T + \eta \lambda (V_{com}^*)_m {b_{m+1}^*}^T \\
    {b_{m+1}^*}^T  \\
    \vdots \\
    {b_{m+C}^*}^T 
\end{bmatrix}.
\end{align*}
Similarly, if the fine-tuning is done over the data of client $i$, we would have:
\begin{equation*}
     \mathbb{E}\bigl[b_j^{FT} \bigr] = b_j^* +  \eta \lambda {(V_0)}_j b_{m+i}^*,
\end{equation*}
which concludes the proof.
\end{proof}

\begin{proof}[Proof of Theorem \ref{Thm: concept shift}]
We assume that the pre-trained model perfectly captures the feature extractor matrix \( B_* \), and its linear head represents the common part shared across all clients, excluding any client-specific components of the ground-truth function. Thus, \( B_0 = B_* \) and \( V_0 = \begin{bmatrix} {V_{com}^*}^T & \mathbf{0} \end{bmatrix}^T \). In this setting, we analyze the effects of LP-FT and FT on the model parameters. For both LP-FT and FT, we determine the parameters after fine-tuning, compute the global loss, and then compare these global losses.

W.L.O.G. we assume that we are doing the fine-tuning over the local data of client 1. First, we study FT.

 It can be shown that:
\allowdisplaybreaks
\begin{align}\label{global_loss_FT_perfect_B}
    \mathcal{L}_G(V_{FT},B_{FT})  & = \frac{1}{C} \sum\limits_{i \in [C]} \mathbb{E}_{x\sim \mathcal{D}_i} \left[  \frac{1}{2} (V_{FT}^T B_{FT} x - {V_i^*}^T B_* x) ^ 2\right]  \nonumber\\
    & = \frac{1}{2C} \sum\limits_{i \in [C]} \mathbb{E}_{x\sim \mathcal{D}_i} \biggl[  (B_{FT}^T V_{FT} - B_*^T V_i^*)^T x x^T (B_{FT}^T V_{FT} - B_*^T V_i^*) \biggr] \nonumber \\
    & = \frac{1}{2C} \sum\limits_{i \in [C]}    (B_{FT}^T V_{FT} - B_*^T V_i^*)^T \biggl[ \mathbb{E}_{x\sim \mathcal{D}_i} \bigl[x x^T\bigr]\biggr] (B_{FT}^T V_{FT} - B_*^T V_i^*)  \nonumber \\
    & =  \frac{1}{2C} \sum\limits_{i \in [C]}    (B_{FT}^T V_{FT} - B_*^T V_i^*)^T  (B_{FT}^T V_{FT} - B_*^T V_i^*)  \nonumber & \hspace{-1.5cm}\text{(second moment is $I_d$)} \\
    & =  \frac{1}{2C} \sum\limits_{i \in [C]}   \left\| (B_{FT}^T V_{FT} - B_*^T V_i^*) \right\|_2^2 . 
\end{align}

We have:
\begin{align*}
    B_*^T V_i^* & = \sum\limits_{j=1}^m (V_{com}^*)_j b_j^* + \lambda b_{m+i}^* \\
   \mathbb{E}\Big[ B_{FT}^T V_{FT} \Big]& = \sum\limits_{j=1}^m (V_{com}^*)_j b_j^* +  \sum\limits_{j=1}^m \eta \lambda  {(V_{com}^*)_j}^2 b_{m+1}^* + \eta \lambda  b_{m+1}^*.
\end{align*}
Therefore, we can obtain:
\begin{align*}
     \mathbb{E}\Big[B_{FT}^T V_{FT} - B_*^T V_i^*\Big] = \lambda \Bigl( \sum\limits_{j=1}^m \eta {(V_{com}^*)_j}^2 b_{m+1}^* + \eta b_{m+1}^* - b_{m+i}^* \Bigr).
\end{align*}
For $i \neq 1$, we have:
\begin{align*}
    & \mathbb{E}\big[(B_{FT}^T V_{FT} - B_*^T V_i^*)^T  (B_{FT}^T V_{FT} - B_*^T V_i^*) \big] \\
    & \hspace{1cm} = \lambda^2\bigl( \sum\limits_{j=1}^m \eta {(V_{com}^*)_j}^2 b_{m+1}^* + \eta b_{m+1}^* - b_{m+i}^* \bigr)^T \bigl( \sum\limits_{j=1}^m \eta {(V_{com}^*)_j}^2 b_{m+1}^* + \eta b_{m+1}^* - b_{m+i}^* \bigr) \\
    & \hspace{1cm} = \lambda^2 \Biggl( \biggl( \eta + \eta \sum\limits_{j=1}^m  {(V_{com}^*)_j}^2\biggr)^2 + 1 \Biggr). & \hspace{-6.5cm}\text{(rows of $B_*$ are orthonormal)}
\end{align*}
For $i = 1$, we have:
\begin{align*}
    & \mathbb{E}\Big[(B_{FT}^T V_{FT} - B_*^T V_i^*)^T  (B_{FT}^T V_{FT} - B_*^T V_i^*)  \Big]\\
    & \hspace{1cm} = \lambda^2 \bigl( \sum\limits_{j=1}^m \eta {(V_{com}^*)_j}^2 b_{m+1}^* + \eta b_{m+1}^* - b_{m+i}^* \bigr)^T \bigl( \sum\limits_{j=1}^m \eta {(V_{com}^*)_j}^2 b_{m+1}^* + \eta b_{m+1}^* - b_{m+i}^* \bigr) \\
    & \hspace{1cm} = \lambda^2  \biggl( \eta + \eta \sum\limits_{j=1}^m  {(V_{com}^*)_j}^2 - 1\biggr)^2.  & \hspace{-4cm}\text{(rows of $B_*$ are orthonormal)}
\end{align*}

Combining these with (\ref{global_loss_FT_perfect_B}), we can conclude:
\begin{align}\label{global_loss_FT_perfect_B_final_result}
    \mathcal{L}_G(V_{FT},B_{FT})  & =  \mathbb{E}\bigg[\frac{1}{2C} \sum\limits_{i \in [C]}   \left\| (B_{FT}^T V_{FT} - B_*^T V_i^*) \right\|_2^2  \bigg]\nonumber \\
& = \frac{\lambda^2 }{2C} \Biggl( \biggl( \eta + \eta \sum\limits_{j=1}^m  {(V_{com}^*)_j}^2 - 1\biggr)^2 + (C-1) \biggl( \Bigl( \eta + \eta \sum\limits_{j=1}^m  {(V_{com}^*)_j}^2\Bigr)^2 + 1\biggr) \Biggl) .
\end{align}

% First, we study LP-FT. Initially, one step of linear probing is conducted with the fixed feature extractor \( B_* \). After this step, the linear head \( V_{LP} \) will converge to \( V_1^* \). This is because  we know that:
We can follow the same procedure for LP-FT as follows:

LP step starts from $B_0 = B_*$ and $ V_0 = \begin{bmatrix} {V_{com}^*}^T & \mathbf{0} \end{bmatrix}^T$. From proof of Lemma \ref{lem:gradient_ft},  we know that $\mathbb{E}\Biggl[ \left.\frac{\partial L}{\partial V}\right|_{\substack{V=V_0 \\ B= B_*}} \Biggr]  = (V_0-V_1^*)^T$. Therefore, after one iteration of LP, we have:

\[ V_{LP} = V_0 - \eta \biggl( \left.\frac{\partial L}{\partial V}\right|_{\substack{V=V_0 \\ B= B_*}} \biggr)^T\]
After $I$ iterations we have:
\[  \mathbb{E}\big[V_{LP}\big] = \begin{bmatrix}
      {V_{\mathrm{com}}^*}^{\!\top} & \lambda\alpha e_1^{\top}
   \end{bmatrix}^{\!\top} \qquad \text{for } \quad \alpha = (1 - (1 -\eta)^I)\]

% \hg{\[V_{LP} = \begin{bmatrix}
%       {V_{\mathrm{com}}^*}^{\!\top} & \lambda\alpha e_1^{\top}
%    \end{bmatrix}^{\!\top} \qquad \text{for } \alpha = (1 - (1 -\eta)^I) \qquad I: \text{number of iterations}\]}

% \hg{Step 2: Find $V_{LPFT}$ and $B_{LPFT}$: starts here!}
Let the initial parameters after one step of LP and before the FT step be
\[
   B_{LP} = B_*,
   \qquad   
    \mathbb{E}\big[V_{LP} \big]
   \;=\;
   \begin{bmatrix}
      {V_{\mathrm{com}}^*}^{\!\top} & \lambda\alpha e_1^{\top}
   \end{bmatrix}^{\!\top},
   \quad\alpha\in[0,1].
\]
Assume the first client ($i = 1$) runs a single full-batch gradient–descent
step with learning–rate~$\eta$ on its local MSE loss.
Let $V_{LPFT}$ and $B_{LPFT}$ be the resulting linear head and feature-extractor matrix and
$(b_j^{LPFT})^{\!\top}$ its $j$-th row after the final FT step.  Then
\begin{align*}
    \mathbb{E} \bigl[ V_{LPFT} \bigr]  & =  \begin{bmatrix} {V_{com}^*}^T &  \lambda \big( \alpha  + \eta(1 - \alpha)\big)e_1^T \end{bmatrix}^T\\
    \mathbb{E}\bigl[b_j^{LPFT} \bigr] & = \mathbb{E}\biggl[b_j^* -  \eta \biggl( \left.\frac{\partial L}{\partial b_j}\right|_{\substack{V= V_{LP} \\ B_{LP} = B_*}}\biggr)^T \bigg]= b_j^* -  \eta \biggl( {(V_{LP})}_j  (V_{LP} -{V_1^*})^T B_*\biggr)^T \\
    & = b_j^* -  \eta \lambda (1 - \alpha) \biggl( {(V_{LP})}_j  \begin{bmatrix} \mathbf{0}^T & -e_1^T \end{bmatrix} B_*\biggr)^T = b_j^* +  \eta \lambda (1 - \alpha){(V_{LP})}_j b_{m+1}^*.
\end{align*}

This steps can be directly obtained exactly as we derived $\mathbb{E} \bigl[ V_{FT} \bigr]$ and $\mathbb{E} \bigl[b_j^{FT} \bigr]$ for FT case.

Similar to computing $\mathcal{L}_G(V_{FT},B_{FT})$, we find $\mathcal{L}_G(V_{LPFT},B_{LPFT})$ as follows:%--------------------------------------------------------------------
%---  Lemma : Global loss after a single LP-FT step  ----------------
%--------------------------------------------------------------------

  Writing
\(S:=\sum_{j=1}^{m}(V_{\mathrm{com}}^*)_{j}^{2}=\|V_{\mathrm{com}}^*\|_2^{2}\)
and
\[
   A_\alpha
      :=\eta(1-\alpha)S
        +\bigl[\alpha+\eta(1-\alpha)\bigr]
         \bigl[1+\eta\lambda^{2}\alpha(1-\alpha)\bigr],
\]
the expected global loss
\[
   \mathcal{L}_G(V_{LPFT},B_{LPFT})
   :=\frac1{C}\sum_{i=1}^{C}\mathbb{E}_{x\sim\mathcal{D}_i}
          \Bigl[\tfrac12\bigl(V_{LPFT}^{\top}B_{LPFT}x-V_i^{*\top}B_*x\bigr)^2\Bigr]
\]
satisfies
\[
   \mathcal{L}_G(V_{LPFT},B_{LPFT})
      \;=\;
      \frac{\lambda^{2}}{2C}
      \Bigl[(A_\alpha-1)^{2}
            +(C-1)A_\alpha^{2}\Bigr].
\]

this is because
\allowdisplaybreaks
the LP-FT loss can be obtained similar to FT loss as in (\ref{global_loss_FT_perfect_B}),
\[
   \mathcal{L}_G(V_{LPFT},B_{LPFT})
      \;=\;
       \mathbb{E}\bigg[\frac1{2C}\sum_{i=1}^{C}
         \bigl\|B_{LPFT}^{\top}V_{LPFT}-B_*^{\top}V_i^{*}\bigr\|_{2}^{2}\bigg].
\]

%--------------------------------------------------------------------
% \paragraph{Step 1: \(\mathbf{B_{LPFT}^{\top}V_{LPFT}}\).}
% For \(j\le m\)
% \(
%    (V_{LPFT})_{j}=(V_{\mathrm{com}}^*)_{j}
% \)
% and
% \(
%    b_{j}^{LPFT}=b_{j}^{*}
%                +\eta\lambda(1-\alpha)(V_{\mathrm{com}}^*)_{j}\,b_{m+1}^{*}.
% \)
% Their contribution is
% \[
%    \sum_{j=1}^{m}(V_{\mathrm{com}}^*)_{j}b_{j}^{*}
%    \;+\;\eta\lambda(1-\alpha)S\,b_{m+1}^{*}.
% \]

% For \(j=m+1\)
% \[
%    (V_{LPFT})_{m+1}=\lambda\bigl[\alpha+\eta(1-\alpha)\bigr],\quad
%    b_{m+1}^{LPFT}
%      =\bigl[1+\eta\lambda^{2}\alpha(1-\alpha)\bigr]b_{m+1}^{*},
% \]
% so the contribution is
% \(
%    \lambda\bigl[\alpha+\eta(1-\alpha)\bigr]
%    \bigl[1+\eta\lambda^{2}\alpha(1-\alpha)\bigr]b_{m+1}^{*}.
% \)

% Rows \(m+2{:}m+C\) do not contribute because \((V_{LPFT})_j=0\) for \(j>m+1\).
First, it can be shown that:
\[
    \mathbb{E}\Big[B_{LPFT}^{\top}V_{LPFT}\Big]
      =\sum_{j=1}^{m}(V_{\mathrm{com}}^*)_{j}b_{j}^{*}
       \;+\;\lambda\,A_\alpha\,b_{m+1}^{*}.
\]

%--------------------------------------------------------------------

For each client \(i\in[C]\),
\(B_*^{\top}V_i^{*}
      =\sum_{j=1}^{m}(V_{\mathrm{com}}^*)_{j}b_{j}^{*}
       +\lambda\,b_{m+i}^{*}\).
Therefore, defining  
\[
\Delta_i :=  \mathbb{E}\Big[B_{LPFT}^{\top} V_{LPFT} - B_*^{\top} V_i^{*}\Big],
\]
and noting the orthonormality of the rows of \( B_* \), we obtain  
\[
\|\Delta_i\|_2^2 =
\begin{cases}
\lambda^2 (A_\alpha - 1)^2, & i = 1, \\[3pt]
\lambda^2 A_\alpha^2, & i \neq 1.
\end{cases}
\]

%--------------------------------------------------------------------

Substituting the above norms in the global loss,
\[
   \mathcal{L}_G(V_{LPFT},B_{LPFT})
      =\frac{\lambda^{2}}{2C}
         \Bigl[(A_\alpha-1)^{2}
               +(C-1)A_\alpha^{2}\Bigr].
\]
which is the stated result.

Since we consider convergence for the first (linear probing) step, at convergence we have 
$\alpha \to 1$, and thus $A_\alpha \to 1$. Therefore, we obtain
\[
   \mathcal{L}_G(V_{LPFT},B_{LPFT})
      = \frac{\lambda^{2}}{2C}\bigl[(A_\alpha-1)^{2}+(C-1)A_\alpha^{2}\bigr]
      \xrightarrow[\alpha \to 1]{} \frac{\lambda^{2}}{2C}(C-1).
\]
In contrast, from (\ref{global_loss_FT_perfect_B_final_result}), we have:
 \[  \mathcal{L}_G(V_{FT},B_{FT})
      = \frac{\lambda^2}{2C}
        \Biggl( \biggl( \eta + \eta \sum\nolimits_{j=1}^m  {(V_{com}^*)_j}^2 - 1\biggr)^2 
        + (C-1) \biggl( \Bigl( \eta + \eta \sum\nolimits_{j=1}^m  {(V_{com}^*)_j}^2\Bigr)^2 + 1\biggr) 
        \Biggr).
\]
Since $\lambda , C \geq 0$, $\bigl( \eta + \eta \sum\nolimits_{j=1}^m  {(V_{com}^*)_j}^2 - 1\bigr)^2 \geq 0$ and $\bigl( \eta + \eta \sum\nolimits_{j=1}^m  {(V_{com}^*)_j}^2\bigr)^2 \geq 0$, each term in $\mathcal{L}_G(V_{FT},B_{FT})$ exceeds its counterpart in $\mathcal{L}_G(V_{LPFT},B_{LPFT})$. 
Hence,
\[
   \mathcal{L}_G(V_{LPFT},B_{LPFT}) 
      \leq \mathcal{L}_G(V_{FT},B_{FT}),
\]
which concludes the proof.

% % \hg{step 4: starts here}
%    Now, we compare $\mathcal{L}_G(V_{LPFT},B_{LPFT})$ and $\mathcal{L}_G(V_{FT},B_{FT})$ by computing the difference:
% \[
% \begin{aligned}
% \Delta
%   &:=\;
%   \mathcal{L}_G(V_{LPFT},B_{LPFT})
%   \;-\;
%   \mathcal{L}_G(V_{FT},B_{FT}) \\[4pt]
%   &= \frac{\lambda^{2}}{2C}\,\Bigl[
%         (A_\alpha-1)^{2}
%         -\bigl(\eta(1+S)-1\bigr)^{2}
%         +(C-1)\bigl(A_\alpha^{2}-\eta^{2}(1+S)^{2} -1 \bigr)
%       \Bigr] \\[6pt]
%   &= \frac{\lambda^{2}}{2C}\,\Bigl[
%         C\bigl(A_\alpha^{2}-\eta^{2}(1+S)^{2}\bigr)
%         +2\bigl(\eta(1+S)-A_\alpha\bigr)
%       - (C-1)\Bigr]\\
%       & = \frac{\lambda^{2}}{2C}\,
%   \Bigl(A_\alpha-\eta(1+S)\Bigr)\,
% \Bigl(C\bigl(A_\alpha+\eta(1+S)\bigr)-2\Bigr) \\ & \leq 0.
% \end{aligned}
% \]

% This is because, at convergence of the linear probing (LP) step, \( \alpha \to 1 \), and the product
% \[
% \left(A_\alpha - \eta(1 + S)\right)\left(C\left(A_\alpha + \eta(1 + S)\right) - 2\right)
% \]
% becomes non-positive.
% Therefore, we have:
% \begin{equation*}
%     \mathcal{L}_G(V_{LPFT},B_{LPFT}) \leq \mathcal{L}_G(V_{FT},B_{FT}).
% \end{equation*}
\end{proof}

\begin{proof}[Proof of Theorem \ref{Theorem: Covariate and concept shift}]
W.L.O.G. we assume that the local fine-tuning is performed on the data of the first client. Initially, one step of linear probing is conducted with the fixed feature extractor \( B_* \). After this step, the linear head \( V_{LP} \) will converge to \( V_1^* \). This is because we know that:

\begin{equation*}
    \arg \min _v\left\|\mathbf{X_1} B_0^{\top} v-\mathbf{X_1} B_{*}^{\top} v_{*}\right\|_2^2=\left(B_0 \mathbf{X_1}^{\top} \mathbf{X_1} B_0^{\top}\right)^{-1} B_0 \mathbf{X_1}^{\top} \mathbf{X_1} B_{*}^{\top} v_{*},
\end{equation*}
where $\mathbf{X_1}$ is the $n \times d$ matrix including 
data of $n$ individuals. Since the fine-tuning is on the data of the client 1 (local data), we have:
\begin{equation*}
    V_{LP} = \left(B_0 \mathbf{X_1}^{\top} \mathbf{X_1} B_0^{\top}\right)^{-1} B_0 \mathbf{X_1}^{\top} \mathbf{X_1} B_{*}^{\top} V_1^*.
\end{equation*}
Therefore, we have:
\begin{align*}
    V_{LP} & =   \left(B_0 \mathbf{X_1}^{\top} \mathbf{X_1} B_0^{\top}\right)^{-1} B_0 \mathbf{X_1}^{\top} \mathbf{X_1} B_{*}^{\top} V_1^*  \\ 
     & = \left(B_* \mathbf{X_1}^{\top} \mathbf{X_1} B_*^{\top}\right)^{-1} B_* \mathbf{X_1}^{\top} \mathbf{X_1} B_{*}^{\top} V_1^*\\
   & =  V_1^* .
\end{align*}

The argument parallels the opening of the proof of Theorem \ref{Thm: concept shift}, where we simulate 
$I$ iterations of LP. Since at the beginning of the fine-tuning (FT) step, we have the perfect \( B_* \) and \( V_1^* \) for the local client 1, and FT is performed on the data of the same client, we can conclude that after one step of FT following LP, the parameters will remain unchanged. Specifically, we have \( V_{LPFT} = V_1^* = \begin{bmatrix} {V_{com}^*}^T & \lambda e_1^T \end{bmatrix}^T \) and \( B_{LPFT} = B_* \).

For the performance on the global data, we have:
\allowdisplaybreaks
\begin{align}\label{global_loss_LPFT_perfect_B_Covariate}
    \mathcal{L}_G(V_{LPFT},B_{LPFT})  \hspace{-2cm}& \hspace{2cm} = \frac{1}{C} \sum\limits_{i \in [C]} \mathbb{E}_{x\sim \mathcal{D}_i} \left[  \frac{1}{2} (V_{LPFT}^T B_{LPFT} x - {V_i^*}^T B_* x) ^ 2\right]  \nonumber\\
    &  = \frac{1}{C} \sum\limits_{i \in [C]} \mathbb{E}_{x\sim \mathcal{D}_i} \left[  \frac{1}{2} ({V_1^*}^T B_* x - {V_i^*}^T B_* x) ^ 2\right] \nonumber \\
    & = \frac{1}{2C} \sum\limits_{i \in [C]} \mathbb{E}_{x\sim \mathcal{D}_i} \biggl[  (B_*^T V_1^* - B_*^T V_i^*)^T x x^T (B_*^T V_1^* - B_*^T V_i^*) \biggr] \nonumber \\
    & = \frac{1}{2C} \sum\limits_{i \in [C]}  \biggl[  (B_*^T V_1^* - B_*^T V_i^*)^T \mathbb{E}_{x \sim \mathcal{D}_i} \bigl[x x^T \bigr] (B_*^T V_1^* - B_*^T V_i^*) \biggr] \nonumber \\
 & = \frac{1}{2C} \sum\limits_{i \in [C]}  \biggl[  ( V_1^* -  V_i^*)^T B_* \Bigl( \mathbb{E}_{x \sim \mathcal{D}_i} \bigl[x x^T \bigr]\Bigr) B_*^T ( V_1^* - V_i^*) \biggr] 
 \nonumber \\
 & = \frac{1}{2C} \sum\limits_{i \in [C]}  \biggl[  ( V_1^* -  V_i^*)^T B_* \Bigl(\mathbb{E}_{z \sim \mathcal{N}(0, I_d) } \bigl[(e_i + \eps z) (e_i + \eps z)^T \bigr] \Bigr) B_*^T ( V_1^* - V_i^*) \biggr] 
 \nonumber \\
 & = \frac{1}{2C} \sum\limits_{i \in [C]}  \biggl[  ( V_1^* -  V_i^*)^T B_* \Bigl(e_i e_i^T + \eps^2\mathbb{E}_{z \sim \mathcal{N}(0,I_d) } \bigl[ z z^T \bigr] \Bigr) B_*^T ( V_1^* - V_i^*) \biggr] 
 \nonumber \\
& = \frac{1}{2C} \sum\limits_{i \in [C]}  \biggl[  ( V_1^* -  V_i^*)^T B_* \Bigl(e_i e_i^T + \eps^2  I_d \Bigr) B_*^T ( V_1^* - V_i^*) \biggr] 
 \nonumber \\
 & = \frac{1}{2C} \sum\limits_{i \in [C]}  \biggl[  ( V_1^* -  V_i^*)^T B_* \Bigl(e_i e_i^T  \Bigr) B_*^T ( V_1^* - V_i^*) \biggr] \nonumber \\ 
 &  \hspace{0.5cm} +\frac{1}{2C} \sum\limits_{i \in [C]}  \biggl[  ( V_1^* -  V_i^*)^T B_* \Bigl(  \eps^2  I_d\Bigr) B_*^T ( V_1^* - V_i^*) \biggr] 
 \nonumber \\
 & = \frac{1}{2C} \sum\limits_{i \in [C]}  \biggl[  ( V_1^* -  V_i^*)^T (B_*)_{:,i} {(B_*)_{:,i}}^T ( V_1^* - V_i^*) \biggr] \nonumber  & \hspace{-6.5cm} \text{($(B_*)_{:,i}$ $i$-th column of $B_*$)} \\ 
 & \hspace{0.5cm} +\frac{1}{2C} \sum\limits_{i \in [C]}  \biggl[  \eps^2  ( V_1^* -  V_i^*)^T    ( V_1^* - V_i^*) \biggr] 
 \nonumber  & \hspace{-6.5cm} \text{($B_*$ has orthonormal rows)}\\
 & =   \frac{\lambda ^ 2}{2C} \sum\limits_{i \in [C]}  \biggl[  \Bigl( (B_*)_{m+1,i} -  (B_*)_{m+i,i} \Bigr)^2 \biggr] + \frac{1}{2C} \eps^2  \sum\limits_{i \in [C]}  \biggl[   {\bigl\| (V_1^* - V_i^*)\bigr\|}_2^2  \biggr] 
 \nonumber \\
 & = \frac{\lambda ^ 2}{2C} \sum\limits_{i \in [C]}  \biggl[  \Bigl( (B_*)_{m+1,i} -  (B_*)_{m+i,i} \Bigr)^2 \biggr] + \frac{\lambda ^ 2 \bigl( C-1\bigr)}{C} \eps^2 .
\end{align}
We want to analyze the fine-tuning (FT) method, focusing on the effect of initial parameters. We perform one pass through the entire dataset to simulate the complete fine-tuning process. Consider the Mean Squared Error (MSE) loss function with parameters \( V \) and \( B \), where \( B \) is represented as follows:
\[
B = \begin{bmatrix}
 b_1^T \\ 
 \vdots \\ 
 b_m^T \\ 
 b_{m+1}^T \\
 \vdots \\ 
 b_{m+C}^T 
\end{bmatrix},
\]
where \( b_i^T \in \mathbb{R}^{1 \times d} \) denotes the \( i \)-th row of matrix \( B \), and \( m + C = k \).

To apply one step of gradient descent, we need to compute the gradient of the loss function with respect to \( V \), \( b_1 \), \( b_2 \), \ldots, \( b_{m+C} \), and then perform one update step.

% Assume we have $n$ data points coming from the distribution of the data of the
% Assume the local client is client 1. We have: 
% \begin{equation*}
%     \mathcal{L}_L(V,B)  = \mathbb{E}_{x\sim \mathcal{D}_1} \left[  \frac{1}{2} (V^T B x - {V_1^*}^T B_* x) ^ 2\right], \\
% \end{equation*}
Let $V_0 = \begin{bmatrix} {V_{com}^*}^T & \mathbf{0}^T \end{bmatrix}^T$. It follows that:
\allowdisplaybreaks
\begin{align*}
    \mathbb{E}\Biggl[ \left.\frac{\partial L}{\partial V}\right|_{\substack{V=V_0 \\ B= B_*}} \Biggr] \hspace{-1.3cm}&\hspace{1.3cm} =  \mathbb{E}_{x\sim \mathcal{D}_1} \left[   (V_0^T B_* x - {V_1^*}^T B_* x) x^T B_*^T\right] \\
    & = \mathbb{E}_{x\sim \mathcal{D}_1} \Bigl[   \bigl((V_0-V_1^*)^T B_* x \bigr) x^T B_*^T\Bigr] \\
    & =    (V_0-V_1^*)^T B_*  \Bigl(\mathbb{E}_{x\sim \mathcal{D}_1} \Bigl[x x^T\Bigr] \Bigr) B_*^T \\
    & =    (V_0-V_1^*)^T B_* \Bigl(\mathbb{E}_{z \sim \mathcal{N}(0,  I_d) } \bigl[(e_1 + \eps z) (e_1 + \eps z)^T \bigr] \Bigr) B_*^T \\
    & = (V_0-V_1^*)^T B_* \Bigl(e_1 e_1^T + \eps^2   I_d \Bigr) B_*^T \\
    & = (V_0-V_1^*)^T B_* \Bigl(e_1 e_1^T \Bigr) B_*^T  +(V_0-V_1^*)^T B_* \Bigl(\eps^2   I_d \Bigr) B_*^T \\
    & =  (V_0-V_1^*)^T B_* \Bigl(e_1 e_1^T \Bigr) B_*^T  +   \eps^2  (V_0-V_1^*)^T  & \text{($B_*$ has orthonormal rows)} \\
    & = (V_0-V_1^*)^T \Bigl( (B_*)_{:,1} {(B_*)_{:,1}}^T \Bigr)  +   \eps^2  (V_0-V_1^*)^T & \text{($(B_*)_{:,1}$ is first column of $B_*$)} \\
    & = \Bigl(   - \lambda (B_*)_{m+1,1} \Bigr)(B_*)_{:,1}^T +  \eps^2  (V_0-V_1^*)^T.
\end{align*}
Let $
B_* = \begin{bmatrix}
 {b_1^*}^T \\ 
 \vdots \\ 
 {b_m^*}^T \\ 
 {b_{m+1}^*}^T \\ 
 \vdots \\ 
 {b_{m+C}^*}^T 
\end{bmatrix}$. Then, it can be shown that:
\begin{align*}
    \mathbb{E} \Biggl[ \left.\frac{\partial L}{\partial b_j}\right|_{\substack{V= V_0 \\ B= B_*}} \Biggr] & = \mathbb{E}_{x\sim \mathcal{D}_1} \left[   (V_0^T B_* x - {V_1^*}^T B_* x) {(V_0)}_j x^T\right] \\
    & = \mathbb{E}_{x\sim \mathcal{D}_1} \left[ {(V_0)}_j  \Bigl((V_0 -{V_1^*})^T B_* x\Bigr)  x^T\right] \\
    & = {(V_0)}_j  (V_0 -{V_1^*})^T B_*  \Bigl( \mathbb{E}_{x\sim \mathcal{D}_1} \left[ x  x^T\right] \Bigr) \\
    & = {(V_0)}_j  (V_0 -{V_1^*})^T B_* \Bigl(\mathbb{E}_{z \sim \mathcal{N}(0,  I_d) } \bigl[(e_1 + \eps z) (e_1 + \eps z)^T \bigr] \Bigr) \\
    & = {(V_0)}_j  (V_0 -{V_1^*})^T B_* \Bigl(e_1 e_1^T + \eps^2   I_d \Bigr) \\
    & = {(V_0)}_j  (V_0 -{V_1^*})^T B_* \Bigl(e_1 e_1^T \Bigr) + {(V_0)}_j  (V_0 -{V_1^*})^T B_* \Bigl(\eps^2   I_d \Bigr)\\
    & = {(V_0)}_j  (V_0 -{V_1^*})^T B_* \Bigl(e_1 e_1^T \Bigr) + \eps^2   {(V_0)}_j  (V_0 -{V_1^*})^T B_*. 
\end{align*}
 Here,  ${(V_0)}_j$ is the $j$-th element of the vector $V_0$. For learning rate $\eta$, one step of gradient descent can be:

\begin{align*}
    V_{FT} & = V_0 - \eta \biggl( \left.\frac{\partial L}{\partial V}\right|_{\substack{V=V_0 \\ B= B_*}} \biggr)^T\\
    b_j^{FT} & = b_j^* -  \eta \biggl( \left.\frac{\partial L}{\partial b_j}\right|_{\substack{V= V_0 \\ B_0 = B_*}}\biggr)^T.
\end{align*}
These two equations can be further refined as:
\begin{align*}
    \mathbb{E}\bigl[V_{FT}\bigr] & = \begin{bmatrix} {V_{com}^*}^T & \mathbf{0}^T \end{bmatrix}^T - \eta \biggl(   - \lambda (B_*)_{m+1,1} (B_*)_{:,1} +  \eps^2  (V_0-V_1^*)\biggr)  \\
    & = \begin{bmatrix}
        V^*_{com} \\ 0 \\ 0 \\ \vdots \\ 0  
    \end{bmatrix} + \begin{bmatrix}
        \mathbf{0} \\  \eta \lambda \eps^2   \\ 0 \\ \vdots \\ 0  
    \end{bmatrix} + \eta \lambda (B_*)_{m+1,1} (B_*)_{:,1} \\ 
    \mathbb{E}\bigl[b_j^{FT}\bigr] & = b_j^* -  \eta \biggl({(V_0)}_j  (V_0 -{V_1^*})^T B_* \Bigl(e_1 e_1^T \Bigr) + \eps^2   {(V_0)}_j  (V_0 -{V_1^*})^T B_* \biggr)^T  \\
    & =  b_j^* + \begin{bmatrix}
        \eta \lambda {(V_0)}_j (B_*)_{m+1,1} \\
        0 \\
        \vdots \\
        0
    \end{bmatrix} + \eta \lambda {(V_0)}_j \eps^2   b_{m+1}^*.
\end{align*}

Therefore, we have:
\allowdisplaybreaks
\begin{align*}
    \mathbb{E}\bigl[B_{FT} \bigr] & = \begin{bmatrix}
        {b_1^*}^T + \eta \lambda {(V_0)}_1 \eps^2   {b_{m+1}^*}^T + \eta \lambda {(V_0)}_1 (B_*)_{m+1,1} e_1^T \\
        \vdots \\
        {b_m^*}^T + \eta \lambda{(V_0)}_m \eps^2  {b_{m+1}^*}^T + \eta \lambda {(V_0)}_m (B_*)_{m+1,1} e_1^T\\
        {b_{m+1}^*}^T + \eta \lambda{(V_0)}_{m+1} \eps^2   {b_{m+1}^*}^T + \eta \lambda {(V_0)}_{m+1} (B_*)_{m+1,1} e_1^T\\
        \vdots \\
        {b_{m+C}^*}^T + \eta \lambda{(V_0)}_{m+C}\eps^2   {b_{m+1}^*}^T+ \eta \lambda {(V_0)}_{m+C} (B_*)_{m+1,1} e_1^T
    \end{bmatrix}  \\ \\
    & = \begin{bmatrix}
        {b_1^*}^T + \eta \lambda {(V_0)}_1 \eps^2   {b_{m+1}^*}^T + \eta \lambda {(V_0)}_1 (B_*)_{m+1,1} e_1^T \\
        \vdots \\
        {b_m^*}^T + \eta \lambda{(V_0)}_m \eps^2  {b_{m+1}^*}^T + \eta \lambda {(V_0)}_m (B_*)_{m+1,1} e_1^T\\
        {b_{m+1}^*}^T \\
        \vdots \\
        {b_{m+C}^*}^T 
    \end{bmatrix}.
\end{align*}

It can be shown that:

\allowdisplaybreaks
\begin{align}\label{global_loss_FT_perfect_B_cov}
    \mathcal{L}_G(V_{FT},B_{FT})   = & \frac{1}{C} \sum\limits_{i \in [C]} \mathbb{E}_{x\sim \mathcal{D}_i} \left[  \frac{1}{2} (V_{FT}^T B_{FT} x - {V_i^*}^T B_* x) ^ 2\right]  \nonumber\\
     = & \frac{1}{2C} \sum\limits_{i \in [C]} \mathbb{E}_{x\sim \mathcal{D}_i} \biggl[  (B_{FT}^T V_{FT} - B_*^T V_i^*)^T x x^T (B_{FT}^T V_{FT} - B_*^T V_i^*) \biggr] \nonumber \\
     = &\frac{1}{2C} \sum\limits_{i \in [C]}    (B_{FT}^T V_{FT} - B_*^T V_i^*)^T \biggl[ \mathbb{E}_{x\sim \mathcal{D}_i} x x^T\biggr] (B_{FT}^T V_{FT} - B_*^T V_i^*)  \nonumber \\
     = & \frac{1}{2C} \sum\limits_{i \in [C]}    (B_{FT}^T V_{FT} - B_*^T V_i^*)^T  \Bigl(e_i e_i^T + \eps^2   I_d \Bigr) (B_{FT}^T V_{FT} - B_*^T V_i^*)  \nonumber  \\
     = &\frac{1}{2C} \sum\limits_{i \in [C]}    (B_{FT}^T V_{FT} - B_*^T V_i^*)^T  \Bigl(e_i e_i^T  \Bigr) (B_{FT}^T V_{FT} - B_*^T V_i^*)  \nonumber \\
    &  +   \frac{1}{2C} \sum\limits_{i \in [C]}    (B_{FT}^T V_{FT} - B_*^T V_i^*)^T  \Bigl(\eps^2   I_d  \Bigr) (B_{FT}^T V_{FT} - B_*^T V_i^*)  \nonumber \\
     = &\frac{1}{2C} \sum\limits_{i \in [C]}    (B_{FT}^T V_{FT} - B_*^T V_i^*)^T  \Bigl(e_i e_i^T  \Bigr) (B_{FT}^T V_{FT} - B_*^T V_i^*)  \nonumber \\
   & +  \eps^2   \frac{1}{2C} \sum\limits_{i \in [C]}   \left\| (B_{FT}^T V_{FT} - B_*^T V_i^*) \right\|_2^2 .  
\end{align}
We have:
\begin{align*}
    B_*^T V_i^* & = \sum\limits_{j=1}^m (V_{com}^*)_j b_j^* + \lambda b_{m+i}^* \\
      \mathbb{E}\Big[B_{FT}^T V_{FT} \Big]& = \eta \lambda \eps^2 \sigma^2 b^*_{m+1}  + \sum\limits_{j=m+1}^{m+C} \biggl(\eta \lambda (B_*)_{m+1,1} (B_*)_{j,1} \biggr)b^*_{j} \\
     &  \hspace{-1.5cm} +
     \sum\limits_{j=1}^m \biggl( {(V^*_{com})}_j + \eta \lambda  (B_*)_{m+1,1} (B_*)_{j,1} \biggr) \biggl( b_j^* + \eta \lambda (V_{com}^*)_j \eps^2 \sigma^2 b_{m+1}^* + \eta \lambda (V_{com}^*)_j (B_*)_{m+1,1} e_1 \biggr) .
\end{align*}
Therefore, we can obtain:
\begin{align}\label{norm_cov}
 \mathbb{E}\Big[B_{FT}^T V_{FT} - B_*^T V_i^* \Big]= &  \sum\limits_{j=1}^{m} {(V^*_{com})}_j \biggl( b_j^* + \eta \lambda  (V_{com}^*)_j \eps^2 \sigma^2 b_{m+1}^* + \eta \lambda (V_{com}^*)_j (B_*)_{m+1,1} e_1 \biggr)  \nonumber\\
& + \eta \lambda \epsilon^2 \sigma^2 b^*_{m+1}  - \lambda b_{m+i}^* + \sum\limits_{j=m+1}^{m+C} \Bigl(\eta \lambda (B_*)_{m+1,1} (B_*)_{j,1} \Bigr)b^*_{j} \nonumber \\
& \hspace{-1.5cm}+\sum\limits_{j=1}^m \biggl(  \eta \lambda (B_*)_{m+1,1} (B_*)_{j,1} \biggr) \biggl( b_j^* + \eta \lambda (V_{com}^*)_j \eps^2 \sigma^2 b_{m+1}^* + \eta \lambda (V_{com}^*)_j (B_*)_{m+1,1} e_1 \biggr).
\end{align}

From equation (\ref{global_loss_LPFT_perfect_B_Covariate}), we observe that \(\mathcal{L}_G(V_{LPFT}, B_{LPFT})\) is a monotonically increasing function of \(\lambda\) and  as \( \lambda \) approaches zero,  \(\mathcal{L}_G(V_{LPFT}, B_{LPFT})\) also converges to zero.
 In contrast, combining equations (\ref{global_loss_FT_perfect_B_cov}) and (\ref{norm_cov}), we find that \(\mathcal{L}_G(V_{FT}, B_{FT})\) does not converge to zero as \(\lambda\) approaches zero due to the presence of constant terms independent of \(\lambda\). Therefore, it follows that there always exists a threshold $\lambda^*$ such that for all \( \lambda \leq \lambda^* \):
\[
\mathcal{L}_G(V_{LPFT}, B_{LPFT}) \leq \mathcal{L}_G(V_{FT}, B_{FT}).
\]
\end{proof}

\section{Empirical Performance of LP-FT and FT Under Theorem \ref{Theorem: Covariate and concept shift} Conditions}
\label{cov_concept_figures}

To give a better understanding of Theorem \ref{Theorem: Covariate and concept shift}, we give a simple visualization of two randomly generated data-generating functions for different clients and compute the global loss of LP-FT and FT based on equations (\ref{global_loss_LPFT_perfect_B_Covariate}) and (\ref{global_loss_FT_perfect_B_cov}).

\begin{figure}[htpb]
    \centering
    \begin{subfigure}[b]{0.45\textwidth}
        \centering
        \includegraphics[width=\textwidth]{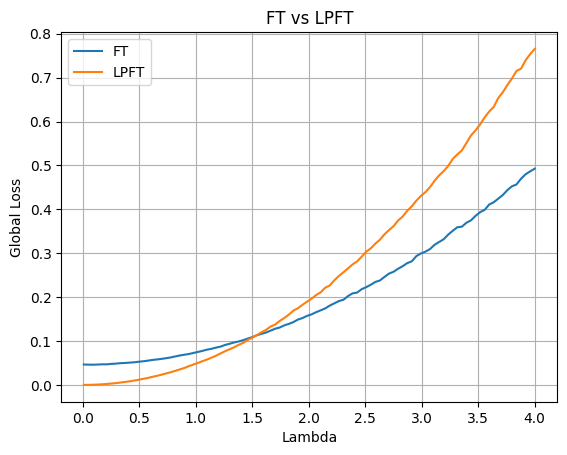}
        \label{fig:theorem_figure_1}
    \end{subfigure}
    \hfill
    \begin{subfigure}[b]{0.45\textwidth}
        \centering
        \includegraphics[width=\textwidth]{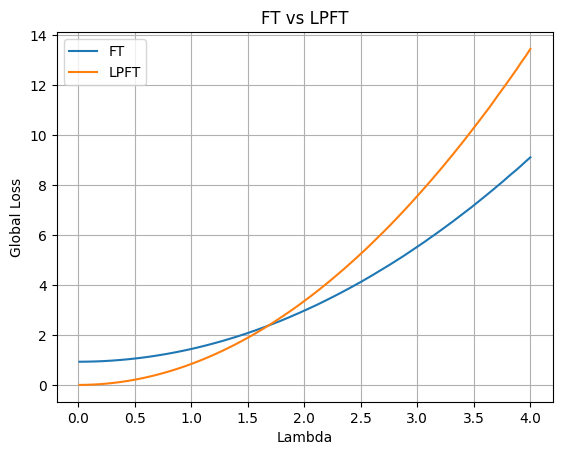}
        \label{fig:theorem_figure_2}
    \end{subfigure}
    \caption{(a) Global loss of LP-FT and FT as a function of the heterogeneity parameter \(\lambda\), with \(\eta=0.1\), \(\epsilon=0.1\), matrix \(B_*\) as a \(10 \times 20\) random matrix, and number of clients \(C=5\). (b) Global loss of LP-FT and FT as a function of the heterogeneity parameter \(\lambda\), with \(\eta=0.1\), \(\epsilon=1\), matrix \(B_*\) as a \(10 \times 20\) random matrix, and number of clients \(C=5\).}
    \label{fig:theorem_combined}
\end{figure}

These examples illustrate, within the theoretical setting of Sec. \ref{sec:LPFT_Concept_Shift_Cov_shift}, the behavior of the loss functions for LP-FT and FT with a randomly generated labeling function \(y = {V_i^*}^T B_* x\), a fixed learning rate \(\eta\), noise parameter \(\epsilon\), and a fixed number of clients \(C\). To compute this, we generated 1000 random matrices \(B_*\) and 1000 randomly chosen linear heads \(V_i^*\) as ground-truth labeling functions, ensuring they adhere to the theoretical assumptions. Using equations (\ref{global_loss_LPFT_perfect_B_Covariate}) and (\ref{global_loss_FT_perfect_B_cov}), we calculated the average loss of LP-FT and FT across these random trials.

As shown in Fig.~\ref{fig:theorem_combined}, there exists a threshold \(\lambda^*\) such that when \(\lambda \leq \lambda^*\), LP-FT consistently outperforms FT. While this is a simplified example with a fixed number of clients, learning rate, noise parameter, and dimensionality of the ground-truth parameters \(B_*\) and \(V_i^*\), the observed trend remains similar across different parameter settings. The purpose of this figure is to provide an intuitive understanding of Theorem \ref{Theorem: Covariate and concept shift} in a controlled, simplified context. More comprehensive experiments in Sec. \ref{sec: further empirical validation} demonstrate that LP-FT globally outperforms FT across a broader range of heterogeneity levels in real-world settings.

\section{Computational Overhead of LP-FT Relative to FT}
\label{sec:app_overhead}

In this section, we study the computational cost of adding one step of linear probing (LP) before full fine-tuning (FT) to see how this additional LP step affects total computational cost. 
Suppose the dimension of the output of the feature extractor layer (i.e., the input to the linear head) is $d$, and the dimension of the output of the linear head is $m$. 
Thus, the linear head is a $d \times m$ linear layer. 
We assume there are $n$ samples. 
Throughout, we ignore the bias term, as its computational cost is negligible compared to that of the main matrix operations.
Our goal is to determine the additional cost incurred by this LP stage compared to FT.

To estimate the computational cost of training a linear neural network layer with $d$ inputs, $m$ outputs, and $n$ samples, we analyze the steps involved:

\paragraph*{Forward pass}
A linear neural network computes outputs as
\[
Y = XW,
\]
where $X \in \mathbb{R}^{n \times d}$ is the input matrix (with $n$ samples, each of dimension $d$),
$W \in \mathbb{R}^{d \times l}$ is the weight matrix, and
$Y \in \mathbb{R}^{n \times l}$ is the output matrix.
The cost of this matrix multiplication is $O(ndl)$.

\paragraph*{Backward pass}
To update $W$, the gradient of the loss $\mathcal{L}$ with respect to $W$ is
\[
\nabla_W \mathcal{L} = X^\top\bigl(\nabla_Y \mathcal{L}\bigr).
\]
Computing $\nabla_W \mathcal{L}$ involves:
computing the gradient of the loss with respect to the outputs $Y$ (cost $O(nl)$), and
performing the multiplication $X^\top\bigl(\nabla_Y \mathcal{L}\bigr)$ (cost $O(ndl)$).

\paragraph*{Weight update}
If using gradient descent, the cost of updating the weights is $O(dl)$.

\textbf{Total Computational Cost:}
The total cost for one forward and backward pass through the data is dominated by $O(ndl)$, 
which accounts for both forward propagation and gradient computation.
If the training involves multiple epochs, the total cost scales as
$
O(e \cdot ndl),
$
where $e$ is the number of epochs.
During full fine-tuning, an additional backward pass through the feature extractor 
incurs a substantially larger cost $C_{\text{backbone}}$ per epoch.
Therefore, adding an LP stage only increases the total cost by $O(e \cdot ndl)$,
which is typically negligible compared to the cost of FT.

\section*{Broader Impact}
This paper presents work whose goal is to advance the field of Machine Learning. There are many potential societal consequences of our work, none of which we feel must be specifically highlighted here.

\section*{Limitation}
While our theoretical analysis is limited to concept and combined concept-covariate shifts under simplified assumptions (e.g., linear models and shared feature extractors), this does not significantly undermine our contributions. Extensive empirical results across seven diverse datasets demonstrate that LP-FT consistently outperforms baseline methods in both personalization and generalization, even in complex, real-world scenarios. This suggests that the core insights behind LP-FT—such as mitigating feature distortion through phased fine-tuning—generalize well beyond the theoretical scope and offer practical value for robust federated personalization.

\end{document}